\def\eqref#1{equation~\ref{#1}}
\def\1{\bm{1}}
\def\eps{{\epsilon}}
\DeclareMathAlphabet{\mathsfit}{\encodingdefault}{\sfdefault}{m}{sl}
\SetMathAlphabet{\mathsfit}{bold}{\encodingdefault}{\sfdefault}{bx}{n}
\newcommand{\bestr}{\mathrm{BR}}
\newcommand{\E}{\mathbb{E}}
\DeclareMathOperator*{\argmax}{arg\,max}
\DeclareMathOperator*{\argmin}{arg\,min}
\newcommand{\abs}[1]{\left \lvert {#1} \right \rvert}
\newcommand{\ind}[1]{\mathds{1}\left\{ {#1}\right\}}
\newcommand{\posproj}[1]{\left[ {#1} \right]_+}
\newcommand{\beps}{\boldsymbol{\epsilon}}
\newcommand{\brho}{\boldsymbol{\rho}}
\newcommand{\defeq}{\vcentcolon=}
\newtheorem{theorem}{Theorem}
\newtheorem{assumption}{Assumption}
\newtheorem{proposition}[theorem]{Proposition}
\theoremstyle{definition}
\newtheorem{definition}{Definition}
\title{The Fair Value of Data Under Heterogeneous Privacy \\Constraints in Federated Learning}
\author{%
 \name Justin S. Kang  
 \email justin\_kang@berkeley.edu\\
 \addr UC Berkeley
  \AND
  \name Ramtin Pedarsani
  \email ramtin@ece.ucsb.edu \\
  \addr UC Santa Barbara
  \AND
  \name Kannan Ramchandran
  \email kannanr@berkeley.edu\\
  \addr UC Berkeley
}
\begin{document}

\maketitle

\begin{abstract}
Modern data aggregation often involves a platform collecting 
data from a network of users with various privacy options.
Platforms must
 solve the problem of how to 
allocate incentives to users to convince them to share their data.
This paper
puts forth an idea for a \textit{fair} amount to compensate 
users for their data at a given privacy level based on an axiomatic definition of fairness, along the lines of the
celebrated Shapley value. 
To the best of our knowledge, these are the first fairness concepts
for data that explicitly consider privacy constraints. 
We also formulate a heterogeneous federated learning problem for the platform with 
privacy level options for users. By studying this problem, we investigate 
the amount of compensation users receive under fair allocations with different
privacy levels, amounts of data, and degrees of heterogeneity.
We also discuss what happens when the platform is forced to design fair incentives. Under certain conditions
we find that when privacy sensitivity is low, the platform will set incentives to ensure that it collects all the data with the lowest privacy options.  When the privacy sensitivity is above a given threshold, the platform will provide no incentives to users. Between these two extremes, the platform will set the incentives so some fraction of the users chooses the higher privacy option and the others chooses the lower privacy option.

\end{abstract}
\section{Introduction}
From media to healthcare to transportation, the vast amount of data generated by people every day has solved difficult problems across many domains. Nearly all machine learning algorithms, including those based on deep learning rely heavily on data and many of the largest companies to ever exist center their business around this precious resource of data. This includes directly selling access to data to others for profit, selling targeted advertisements based on data, or by exploiting data through data-driven engineering, to better develop and market products.
Simultaneously, as users become more privacy conscious, online platforms are increasingly providing \emph{privacy level} options for users. Platforms may provide incentives to users to influence their privacy decisions. This manuscript investigates how platforms can fairly compensate users for their data contribution at a given privacy level.
Consider a platform offering geo-location services with three privacy level options:
\begin{enumerate}[label=\roman*),topsep=0pt]
  \setlength{\itemsep}{1pt}
    \item Users send no data to the platform --- all data processing is local and private.
    \item An intermediate option with federated learning (FL) for privacy. Data remains with the users, but the platform can ask for gradients with respect to a particular loss function. 
    \item A non-private option, where all user data is stored and owned by the platform.
\end{enumerate}
If users choose option (i), the platform does not stand to gain from using that data in other tasks. If the user chooses (ii), the platform is better off, but still has limited access to the data via FL and may not be able to fully leverage its potential. 
Therefore, the platform wants to incentivize users to choose option (iii). 
This may be done by providing services, discounts or money to users that choose this option. Effectively, by choosing an option, users are informally selling (or not selling) their data to platforms. 
Due to the lack of a formal exchange, it can be difficult to understand if this sale of user data is \emph{fair}. Are platforms making the cost of choosing private options like (i) or (ii) too high? Is the value of data much higher than the platform is paying? 

A major shortcoming of the current understanding of data value is that it often fails to explicitly consider a critical factor in an individual's decision to share data---privacy. 
This work puts forth two rigorous notions of the fair value of data in Section~\ref{sec:fairness} that explicitly include privacy and make use of the axiomatic framework of the \emph{Shapley value} from game theory \citep{shapley1952}.

\begin{wrapfigure}{r}{0.45\textwidth}
    \centering
    \includegraphics[width=0.35\columnwidth]{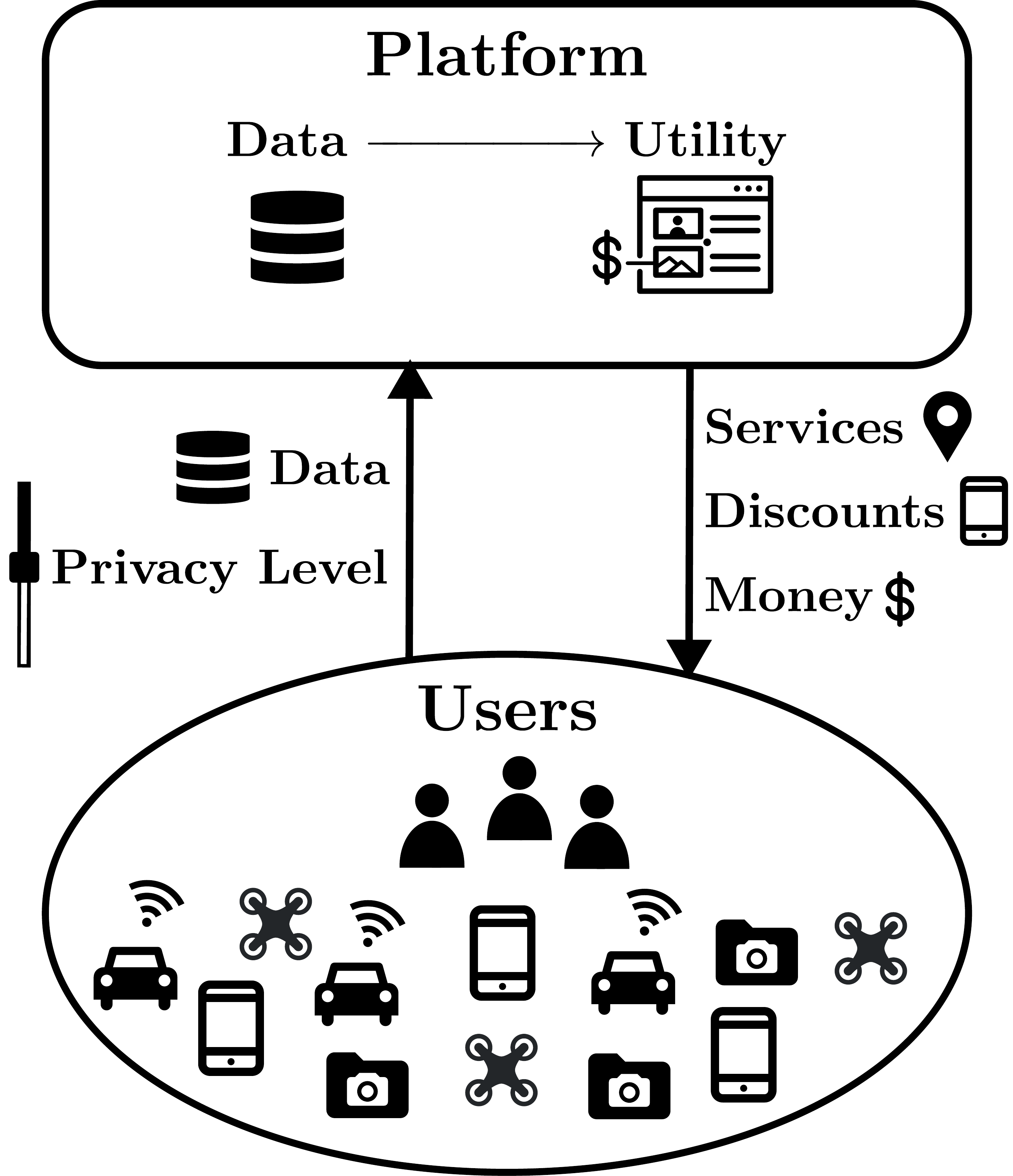}
    \caption{Depiction of interactions between platform and users. Users generate data with phones, cameras, vehicles, and drones. This data goes to the platform but requires some level of privacy. The platform uses this data to generate utility, often by using the data for learning tasks. In return, the platform may provide the users with payments in the form of access to services, discounts on products, or monetary compensation.}
    \label{fig:intro_fig}
    \vspace{-24pt}
\end{wrapfigure}

Compelled by the importance of data in our modern economy
and a growing social concern about privacy, this paper presents frameworks for quantifying the fair value of private data.
Specifically, we consider a setting where users are willing to provide their data to a platform in exchange for some sort of payment and under some privacy guarantees depending on their level of privacy requirements. The platform is responsible for running the private learning algorithm on the gathered data and making the fair payments with the objective of maximizing its utility including statistical accuracy and total amount of payments. Our goal is to understand fair mechanisms for this procedure as depicted in Fig.~\ref{fig:intro_fig}.

\subsection{Related Work}
\paragraph{Economics} With the widespread use of the internet, interactions involving those that have data and those that want it have become an important area of study \citep{Balazinska2011},  and a practical necessity \citep{Spiekermann2015}.
Among these interactions, the economics of data from privacy conscious users has received significant attention in \cite{Acquisti2016} and \cite{Wieringa2021}. The economic and social implications of privacy and data markets are considered in \cite{Spiekermann2015b}.
In \cite{Acemoglu2019} the impact of data externalities is investigated. The leakage of data leading to the suppression of its market value is considered.  

\paragraph{Privacy} Currently, popular forms of privacy include federated learning \citep{Kairouz2021} and
differential privacy (DP) \citep{Dwork2008, Steinke2016} either independently or in conjunction with one another. Our work uses a flexible framework that allows for a rage of different privacy models to be considered. 

\paragraph{Optimal Data Acquisition} One line of literature studies \emph{data acquisition}, where platforms attempt to collect data from privacy conscious users.  \cite{Ghosh2013} consider the case of uniform privacy guarantees (homogeneous DP), where users have unique minimum privacy constraints, focusing on characterizing equilibria. \cite{Ghosh2015} allows for heterogeneous DP guarantees with the goal to design a dominant strategy truthful mechanism to acquire data and estimate the sum of users' binary data.
In \cite{Fallah2022} the authors consider an optimal data acquisition problem in the context of private mean estimation in two different local and central heterogeneous DP settings. 
It is assumed that players care about both the estimation error of the common estimator generated by the platform and any payments made to them by the platform in their decision making. 
By assuming linear privacy sensitivity represented by scalars and drawn from a distribution, they devise a mechanism for computing the near-Bayes optimal privacy levels to provide to the players.
\cite{Cummings2023} focuses on the central setting, under both the linear privacy sensitivity and the privacy constraints model, offering insights into the optimal solution.
\cite{Hu2020} goes beyond linear estimation to consider FL, where each user has a unique privacy sensitivity function parameterized by a scalar variable. Users choose their privacy level, and the platform  pays them via a proportional scheme. For linear privacy sensitivity functions, an efficient way to compute the Nash equilibrium is derived.
\cite{Roth2012, Chen2018, Chen2019} also follow \cite{Ghosh2015} and design randomized mechanisms that use user data with a probability that depends on their reported privacy sensitivity value.

\paragraph{Fairness} In \cite{jia19}, \cite{ghorbani19} and \cite{ghorbani20} a framework for determining the fair value of data is proposed. These works extend the foundational principles of the Shapley value \citep{shapley1952}, which was originally proposed as a concept for utility division in coalitional games to the setting of data. Our work takes this idea further and explicitly includes privacy in the definition of the fair value of data, ultimately allowing us to consider private data acquisition in the context of fairness constraints.
Finally, we note that we consider the concept of fairness in data valuation, not algorithmic fairness, which relates to the systematic failure of machine learning systems to account for data imbalances. 

\subsection{Main Contributions}
\begin{itemize}
  \setlength{\itemsep}{3pt}
  \setlength{\parskip}{0pt}
\item We present an axiomatic notion of fairness that is inclusive of the platforms and the users in Theorem~\ref{thm:fairnes1}. The utility to be awarded to each user and the platform is uniquely determined, providing a useful benchmark for comparison. 
\item In the realistic scenario that fairness is considered between users, Theorem~\ref{thm:fairnes2} defines a notion of fairness based on axioms, but only places restriction on relative amounts distributed to the players. This creates an opportunity for the platform to optimize utility under fairness constraints. 
\item Section~\ref{sec:fed_learn} contains an example inspired by online platform advertisement to heterogeneous users. We use our framework to fairly allocate payments, noticing how those payments differ among different types of users, and how payments change as the degree of heterogeneity increases or decreases. We numerically investigate the mechanism design problem under this example and see how heterogeneity impacts the optimal behavior of the platform. 
\item Finally, Section \ref{sec:two_user_ex} explores the platform mechanism design problem. In Theorem \ref{thm:n_user} we establish that there are three distinct regimes in which the platform's optimal behavior differs depending on the common privacy sensitivity of the users. While existing literature has investigated how a platform should design incentives for users to optimize its utility, this is the first work to consider fairness constraints on the platform. When privacy sensitivity is low, the platform will set incentives to ensure that it collects all the data with the lowest privacy options.  When the privacy sensitivity is above a given threshold, the platform will provide no incentives to users. Between these two extremes, the platform will set the incentives so some fraction of the users chooses the higher privacy option and the remaining chooses the lower privacy option. 
\end{itemize}
\paragraph{Notation}
Lowercase boldface $\mathbf{x}$ and uppercase boldface $\mathbf{X}$ symbols denote vectors and matrices respectively. $\mathbf{X} \odot \mathbf{Y}$ represents the element-wise product of $\mathbf{X}$ and $\mathbf{Y}$. We use $\mathbb{R}_{\geq 0}$ for non-negative reals. Finally, $\mathbf{x} \geq \mathbf{y}$ means that $x_i \geq y_i\;\forall i$. For a reference list of all symbols and their meaning, see Appendix~\ref{apdx:notes}.

\section{PROBLEM SETTING}\label{sec:prob_setting}

\subsection{Privacy Levels and Utility Functions}
\begin{definition}
    A \emph{heterogeneous privacy framework} on the space of random function $A : \mathcal{X}^N \rightarrow \mathcal{Y}$ is:
    \begin{enumerate}
      \setlength{\itemsep}{3pt}
  \setlength{\parskip}{0pt}
        \item A set of \emph{privacy levels} $\mathcal{E} \subseteq \mathbb{R}_{\geq 0} \cup\{\infty\}$, representing the amount of privacy of each user. We use $\rho$ to represent an element of $\mathcal{E}$ in the general case and $\eps$ when the privacy levels are referring to DP parameters (defined below).
        \item A constraint set $\mathcal{A}(\brho) \subseteq \{A : \mathcal{X}^N \rightarrow \mathcal{Y}\}$,
        representing the set of random functions that respect the privacy levels $\rho_i \in \mathcal{E}$ for all $i \in [N]$. If a function $A \in \mathcal{A}(\brho)$ then we call it a $\brho$-private algorithm.
    \end{enumerate}
\end{definition}

We maintain this general notion of privacy framework  because different notions of privacy can be useful in different situations. For example, the lack of rigor associated with notions such as FL, may make it unsuitable for high security applications, but it may be very useful in protecting users against data breaches on servers, by keeping their data local.  One popular choice with rigorous guarantees is DP:
\begin{definition}
Pure heterogeneous $\beps$-DP, is a heterogeneous privacy framework with $\mathcal{E} = \mathbb{R}_{\geq 0} \cup \{\infty\}$ and the constraint set $\mathcal{A}(\beps) = \{A : \mathrm{Pr}(A(\mathbf{x}) \in S) \leq e^{\eps_i}\mathrm{Pr}(A(\mathbf{x}') \in S)\}$ for all measurable sets $S$.
\end{definition}
Henceforth we will use the symbol $\beps$ to represent privacy level when we are specifically referring to DP as our privacy framework, but if we are referring to a general privacy level, we will use $\brho$. Fig.~\ref{fig:fed_learn}, depicts another heterogeneous privacy framework. $\rho_i = 0$ means the user will keep their data fully private, $\rho_i = 1$ is an intermediate privacy option where user data is securely aggregated with other users before it is sent to the platform, which obfuscates it from the platform. 
Finally, if $\rho_i = 2$, the users send a sufficient statistic for their data to the platform. 
\begin{wrapfigure}{r}{0.45\textwidth}
    \centering
    \includegraphics[width=0.45\textwidth]{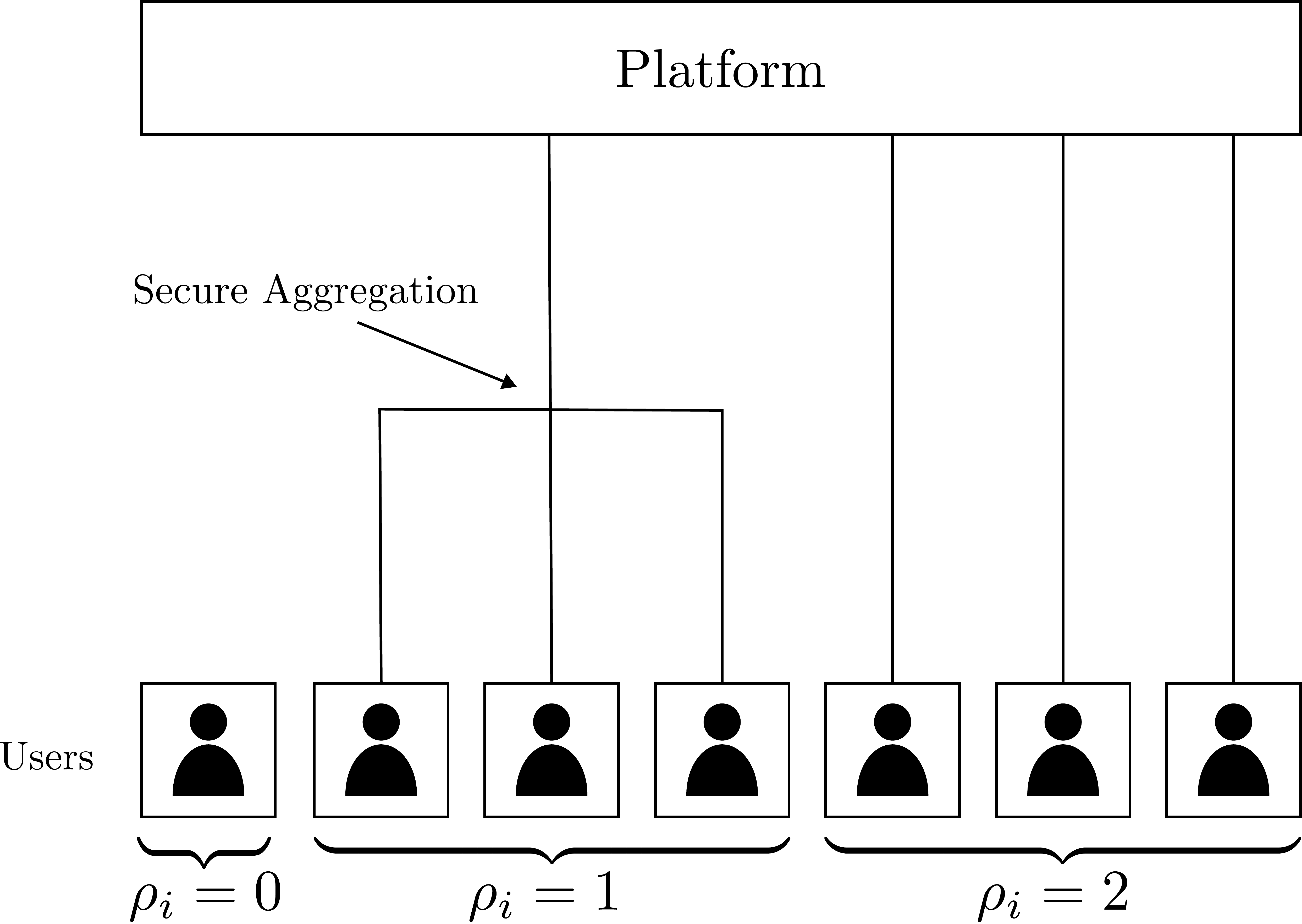}
    \caption{Users choose between three levels of privacy. If $\rho_i = 0$, users send no data to the platform. If $\rho_i= 1$, a user's model is securely combined with other users who also choose $\rho_i=1$, and the platform receives only the combined model. If $\rho_i = 2$, users send their relevant information directly to the platform.}
    \label{fig:fed_learn}
    \vspace{-12pt}
\end{wrapfigure}
The platform applies an $\brho$-private algorithm $A_{\brho}: \mathcal{X}^N \mapsto \mathcal{Y}$ to process the data, providing privacy level $\rho_i$ to data $x_i$. The output of the algorithm $y = A_{\brho}(\mathbf{x})$ is used by the platform to derive utility $U$, which depends on the privacy level $\brho$. 

For example, if the platform is estimating the mean of a population, the utility could depend on the mean square error of the private estimator.




\paragraph{Differences from prior work} 

This formulation differs from the literature of optimal data acquisition (i.e., \cite{Fallah2022}), where \textit{privacy sensitivity} is reported by users, and the platform chooses privacy levels $\rho_i$ based on this sensitivity. Privacy sensitivity is the cost that a user experiences by choosing a particular privacy level. Their formulation allows for a relatively straightforward application of notions like incentive compatibility and individual rationality from mechanism design theory. In this work, we instead emphasize that users choose a privacy level, rather than report a somewhat nebulously defined privacy sensitivity.
Despite this difference, the notions of fairness described in the following section can be applied more broadly.

\subsection{The Data Acquisition Problem}

\begin{figure*}
    \centering
    \includegraphics[width=0.75\textwidth]{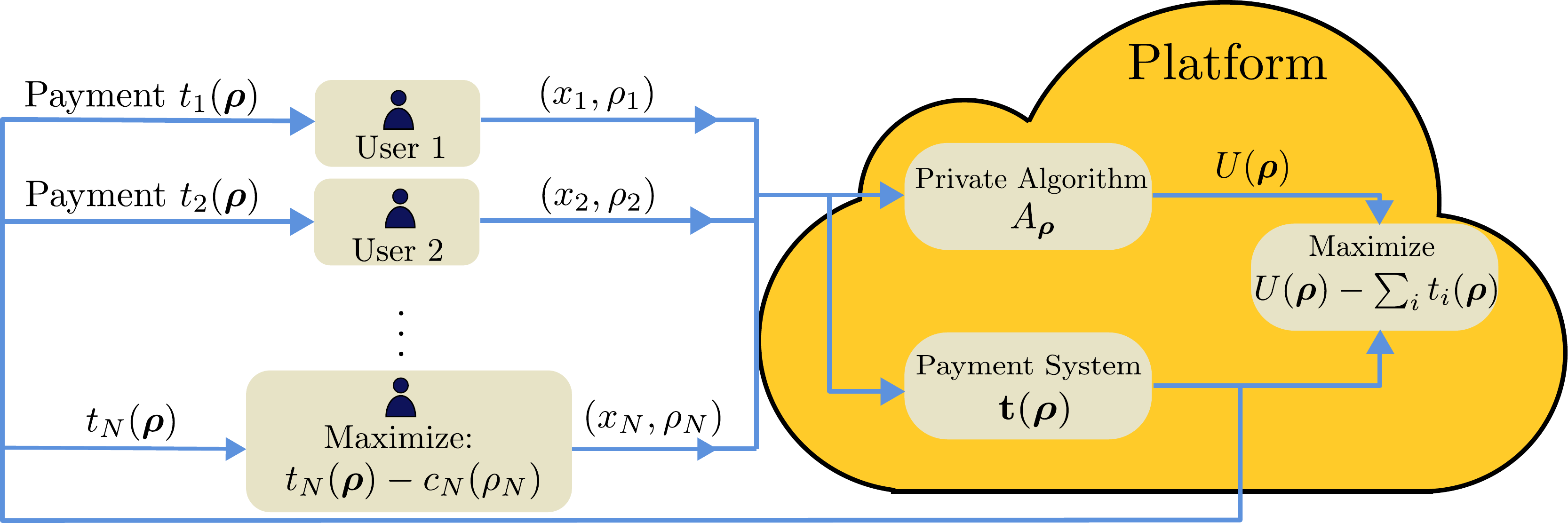}
    \caption{Users send their data $x_i$ and a privacy level $\rho_i$ to the central platform in exchange for payments $t_i(\rho_i;\brho_{-i})$. 
    The central platform extracts utility from the data at a given privacy level and optimizes incentives to maximize the difference between the utility and the sum of payments $U(\brho) - \sum_i t_i(\rho)$.}
    \label{fig:block_diagram}
\end{figure*}

The platform generates transferable and divisible utility $U(\brho)$ from the user data. In exchange, distributes a portion of the utility $t_i(\rho_i;\brho_{-i})$ to user $i$, where $\brho_{-i}$ denotes the vector of privacy levels $\brho$ with the $i$th coordinate deleted.  
These incentives motivate users to lower their privacy level, but each user will also have some sensitivity to their data being shared, modelled by a \emph{sensitivity function} $c_i : \mathcal{E} \rightarrow [0,\infty)$, $c_i(0) = 0$.
The behavior of users can be modelled with the help of a utility function: 
\begin{equation}
     u_i(\brho) = t_i(\rho_i, \brho_{-i}) - c_i(\rho_i).
\end{equation}
The payment to user $i$ will tend to increase with a lower privacy level, as the platform can better exploit the data, but their sensitivity $c_i$ will increase with $\rho_i$, creating a trade-off.
By specifying  $t_i(\rho_i;\brho_{-i})$, the platform effectively creates a game among the users. This situation is depicted in Fig.~\ref{fig:block_diagram}.
Each user's action is the level of privacy that they request for the data they share. Users (players) select their privacy level $\rho_i$ by considering their utility function $u_i$ and the potential actions of the other players.
The platform's goal is to design the payments $t_i(\rho_i;\brho_{-i})$ that maximize its net utility $U(\brho) - \mathds{1}^T \mathbf{t}(\brho)$. One way to formulate this problem is to consider maximizing this difference at equilibrium points:
 \begin{maxi}<b > 
{\mathbf{t}(\cdot), \brho }{ U(\brho) - \mathds{1}^T \mathbf{t}(\brho)}{}{}
\addConstraint{ \brho \in \text{NE}(\mathbf{t})}{}{}.
\label{eq:resource_allocate_pure}
\end{maxi}
 $\text{NE}(\mathbf{t})$ denotes the set of NE strategies induced by the payment function $\mathbf{t}$, which is the vector with payment function $t_i$ at index $i$. 
Recall that the NE is a stable state such that no user gains by unilaterally changing their strategy. Depending on the circumstances, we may also want to consider equilibrium points in mixed strategies (distributions) over the privacy space. This could make sense if we expect users to continually interact with the platform, making a different choice each time, such that users ultimately converge to their long-run average payoff. In such cases, we can formulate the problem for the platform as:
 \begin{maxi}<b > 
{\mathbf{t}(\cdot), \mathcal{P} }{ U(\mathcal{P}) - \mathds{1}^T \mathbf{t}(\mathcal{P})}{}{}
\addConstraint{ \mathcal{P} \in \text{NE}(\mathbf{t})}{}{}.
\label{eq:resource_allocate}
\end{maxi}
where we have used the shorthand $f(\mathcal{P} ) = \E_{\brho \sim \mathcal{P}} \left[f(\brho)\right]$ and $\mathcal{P}$ represents a distribution over the privacy space $\mathcal{E}$. Note that in both \eqref{eq:resource_allocate} and \eqref{eq:resource_allocate_pure} restrictions must be placed on $\mathbf{t}$, otherwise it can be made arbitrarily negative. \textit{Individual rationality} is a common condition in mechanism design that says that a user can be made no worse off by participation. In Section \ref{sec:two_user_ex}, we consider a fairness constraint.

\subsection{Model Limitations}
\paragraph{Known sensitivity functions} To solve \eqref{eq:resource_allocate}, the platform requires the privacy sensitivity $c_i$ of each user, and our solution in Section~\ref{sec:two_user_ex} depends on this information. This can be justified when platforms interact with businesses. For example, an AI heath platform may interact with insurance companies and hospitals and can invest significant resources into studying each of its partners. Another example is advertisement platforms and sellers. Another justification is that the privacy sensitivity $c_i$ is learned by the platforms over time, and we are operating in a regime where the estimates of $c_i$ have converged. An interesting future direction could be investigating this learning problem.

\paragraph{Data-correlated sensitivity} In Section~\ref{sec:two_user_ex} we treat the sensitivity function $c_i$ as fixed and known, but a practical concern is that $c_i$ may depends on the data $x_i$. Say $x_i$ is biological data pertaining to a disease. Those users with the diseases may have higher $c_i$. Without taking this into account, the collected data will be biased. If our utility function is greatly increased by those users who do have the disease though, they may receive far more payment, compensating for this correlation. We leave a investigation of data-correlated sensitivity and fairness to future work.

\paragraph{Known transferable and divisible utility} Solving \eqref{eq:resource_allocate} also requires knowledge of the utility function. In some cases, the platform may dictate the utility entirely on its own, perhaps to value a diverse set of users. In other cases, like in the estimation setting of Example~\ref{sec:example3}, it may represent a more concrete metric, like a risk function that is easily computed. In some cases, however, the utility function may not be easily computed. For example, it may depend on the revenue of a company's product, or the downstream performance of a deep network.
We also note that $t_i(\rho_i;\brho_{-i})$ may not represent a monetary transfer. Individuals are often compensated for data via discounts or access to services. A shortcoming of our model is that we assume a divisible and transferable utility, which may fail to capture these nuances of compensation.

\paragraph{Informed and Strategic Users} 
We also assume that users can compute and play their equilibrium strategy, which is a standard assumption in game theory. Practically this also means that the platform must be transparent about the incentives, fully publishing this information to the users.

\section{Axiomatic Fairness with Privacy}\label{sec:fairness}
What is a fair way to distribute incentives? One approach is to view the users and platforms as a coalition jointly generating utility.  
Following an axiomatic approach to fairness, the celebrated Shapley value \citep{shapley1952} describes how to fairly divide utility among a coalition. In this section, we take a similar approach to defining fairness.
This coalitional perspective is not a complete characterization of the complex dynamics between users and platforms, but we argue that it is still a useful one.
One of the benefits of this concept of fairness is that it deals with intrinsic value (i.e., how much of the utility comes from the data). This is in contrast to the \emph{market value} that users are willing to sell for (potentially depressed). This information is particularly useful to economists, regulators, and investors, who are interested in characterizing the value of data as capital for the purposes of analysis, taxation, and investment respectively. 

\subsection{Platform as a Coalition Member}

We define a coalition of users and a platform as a collection of  $s$ users, with $0 \leq s \leq N$ and up to $1$ platform.
Let $z \in \{0,1\}$ represent the action of the platform. Let $z=1$ when the platform chooses to join the coalition, and $z=0$ otherwise.
Let $U(\brho)$ be as defined in Section~\ref{sec:prob_setting}. We augment the utility to take into account that the utility is zero if the platform does not participate, and define $\brho_S$ as follows:
\begin{equation}
    U(z, \brho) \defeq 
    \begin{cases} 
      U(\brho) & z = 1 \\
      0 & z = 0
   \end{cases},\quad
   \left[\brho_S\right]_i \defeq 
    \begin{cases} 
      \rho_i & i \in S \\
      0 & \text{else}\
   \end{cases}. 
\end{equation}
Let $\phi_p(z,\brho)$ and $\phi_i(z, \brho)$, $i\in [N]$ represent the ``fair'' amount of utility awarded to the platform and each user $i$ respectively, given $z$ and $\brho$, otherwise described as the ``value'' of a user. Note that these values depend implicitly on both the private algorithm $A_{\brho}$ and the utility function $U$, but for brevity, we avoid writing this dependence explicitly. The result of \cite{Hart1989} show that these values are unique and well defined if they satisfy the following three axioms: 
\begin{enumerate}[label=A.\roman*),topsep=0pt]
  \setlength{\itemsep}{1pt}

    \item \textbf{Two equally contributing users should be paid equally}. For any $i,j \in [N]:
        U(z,\brho_{S\cup\{i\}}) = U(z,\brho_{S\cup\{j\}}) \;\forall S \subset [N] \backslash \{i,j\} \implies \phi_i(z,\brho) = \phi_j(z, \brho).$
        
    In addition, for any user $i \in [N]$, $U(1,\brho_{S\cup\{i\}}) - U(1,\brho_{S}) = 0 \;\;\forall S \subset [N] \backslash \{i\} \implies \phi_i(z, \brho) = 0$. 
    \item \textbf{The sum of all payments is the total utility}. The sum of values is the total utility $U(z, \brho) = \phi_p(z,\brho) + \sum_i \phi_i(z, \brho)$.
    \item \textbf{If two utility functions are combined, the payment for the combined task should be the sum of the individual tasks}. Let $\phi_{p}(z,\brho)$ and $\phi_{i}(z,\brho)$ be the value of the platform and users respectively for the utility function $U$, under the $\brho$-private $A_{\brho}$. Let $V$ be a separate utility function,
    also based on the output of $A_{\brho}$, and let $\phi'_{p}(z,\brho)$ and $\phi'_{i}(z,\brho)$ be the utility of the platform and individuals with respect to $V$. Then under the utility $U + V$, the value of user $i$ is $\phi_i(z, \brho) + \phi_i'(z, \brho)$ and the value of the platform is $\phi_p(z, \brho) + \phi_p'(z, \brho)$.
\end{enumerate}
\begin{theorem}\label{thm:fairnes1}
Let $\phi_p(z, \beps)$ and $\phi_i(z, \beps)$ satisfying axioms \em (A.i-iii) \em represent the portion of total utility awarded to the platform and each user $i$ from utility $U(z, \beps)$. Then they are unique and take the form:
\begin{equation}\label{eq:shap_plat_full}
    \phi_{p}(z, \brho) = \frac{1}{N+1}\sum_{S \subseteq [N]}\frac{1}{\binom{N}{\abs{S}}} U(z, \brho_S),
\end{equation}
\begin{equation}\label{eq:shap_user_full}
    \phi_i(z, \brho) = \frac{1}{N+1}\sum_{S \subseteq [N]\backslash\{i\}} \frac{1}{\binom{N}{\abs{S} + 1}}\left(U(z, \brho_{S\cup\{i\}}) - U(z, \brho_{S})\right).
\end{equation}
\end{theorem}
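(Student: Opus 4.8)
The plan is to recognize the construction of Section~\ref{sec:fairness} as a transferable‑utility cooperative game on $N+1$ players, invoke the classical Shapley characterization cited as \cite{Hart1989} to obtain uniqueness, and then specialize the resulting Shapley formula to our game to read off \eqref{eq:shap_plat_full} and \eqref{eq:shap_user_full}. Concretely, for a fixed privacy profile $\brho$ I would take the player set $[N]\cup\{p\}$ with $p$ the platform, and the characteristic function $w(T)\defeq U(1,\brho_{T\setminus\{p\}})$ if $p\in T$ and $w(T)\defeq 0$ if $p\notin T$; note $w(\emptyset)=0$ and that $w$ depends linearly on the underlying utility $U$. The case $z=0$ needs no work: there $U(0,\cdot)\equiv 0$, so $w\equiv 0$, and the axioms force every value to $0$, which matches what \eqref{eq:shap_plat_full}--\eqref{eq:shap_user_full} yield at $z=0$ since $U(0,\brho_S)=0$ for every $S$. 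So the remaining task is $z=1$.

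Next I would check that (A.i)--(A.iii) are exactly the hypotheses of the cited characterization for this class of games. Axiom (A.i) is the symmetry axiom together with the null‑player axiom, restricted to the user‑players: for users $i,j$ the condition $w(T\cup\{i\})=w(T\cup\{j\})$ for all $T\not\ni i,j$ is automatic when $p\notin T$ and is precisely the stated condition on $U$ when $p\in T$, and similarly for the null condition. Axiom (A.ii) is efficiency, $w([N]\cup\{p\})=\phi_p+\sum_i\phi_i$, and (A.iii) is additivity (in fact linearity) of the value in the game $w$. Granting the cited result, $\phi$ is then forced to be the Shapley value of $w$,
\[
\phi_k=\sum_{T\subseteq([N]\cup\{p\})\setminus\{k\}}\frac{|T|!\,(N-|T|)!}{(N+1)!}\bigl(w(T\cup\{k\})-w(T)\bigr),\qquad k\in[N]\cup\{p\}.
\]

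It then remains to specialize. For $k=p$: $w(T\cup\{p\})-w(T)$ is nonzero only when $p\notin T$, i.e.\ $T=S\subseteq[N]$, where it equals $U(\brho_S)$; using $\tfrac{|S|!\,(N-|S|)!}{(N+1)!}=\tfrac1{N+1}\binom{N}{|S|}^{-1}$ gives \eqref{eq:shap_plat_full}. For $k=i\in[N]$: when $p\notin T$ both $w(T\cup\{i\})$ and $w(T)$ vanish, so only $T=S\cup\{p\}$ with $S\subseteq[N]\setminus\{i\}$ contributes, with $w(T\cup\{i\})-w(T)=U(\brho_{S\cup\{i\}})-U(\brho_S)$ and weight $\tfrac{(|S|+1)!\,(N-|S|-1)!}{(N+1)!}=\tfrac1{N+1}\binom{N}{|S|+1}^{-1}$, which is \eqref{eq:shap_user_full}.

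I do not expect a deep obstacle: the substance is imported from the cited uniqueness theorem, and the rest is a reformulation plus a combinatorial reduction. The step that genuinely needs care is the interface one — confirming that (A.i)--(A.iii) really match the cited hypotheses on this particular class of games $w$, in particular that the platform, a distinguished essential player not directly touched by the user‑only symmetry and null statements, still has its value uniquely pinned down (it does: in every unanimity game whose carrier contains $p$, the platform is symmetric to the users in that carrier, forcing an equal split). After that, the only work left is the bookkeeping that kills half of the marginal‑contribution terms and rewrites the Shapley weights in the $\tfrac1{N+1}\binom{N}{\cdot}^{-1}$ form.
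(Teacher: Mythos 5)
Your proposal is correct and takes essentially the same route as the paper: the appendix likewise treats the platform as an $(N+1)$-st player (tracked by $z'\in\{0,z\}$ in its intermediate Shapley-type formula), imports the cited characterization for uniqueness, and then kills the platform-absent terms via $U(0,\cdot)=0$, which is exactly your reduction $w(T)=0$ for $T\not\ni p$; your weight bookkeeping $\frac{|S|!\,(N-|S|)!}{(N+1)!}=\frac{1}{N+1}\binom{N}{|S|}^{-1}$ (and the analogous identity for $|S|+1$) matches the stated formulas. The one caveat — shared with the paper's own proof — is that fixing the platform's share on unanimity games needs the symmetry/null treatment to apply to the platform as a coalition member (as in the cited characterization), since axiom (A.i) as literally written only constrains pairs of users, so your "forced equal split" step implicitly uses that extended reading rather than (A.i)--(A.iii) verbatim.
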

Theorem~\ref{thm:fairnes1} is proved in Appendix \ref{app:shap}, and resembles the classic Shapley value result \citep{shapley1952}.
\subsection{Fairness Among Users}
Though we can view the interactions between the platform and the users as a coalition, due to the asymmetry that exists between the platform and the users, it also makes sense to discuss fairness among the users alone. In this case, we can consider an analogous set of axioms that involve only the users.
\begin{enumerate}[label=B.\roman*),topsep=0pt]
  \setlength{\itemsep}{1pt}
    \item \textbf{Two equally contributing users should be paid equally}. For any $i,j \in [N]:
        U(\brho_{S\cup\{i\}}) = U(\brho_{S\cup\{j\}}) \;\;\forall S \subset [N] \backslash \{i,j\}
        \implies \phi_i(\brho) = \phi_j(\brho)$.
        
    In addition, for any user $i \in [N]$, $U(\brho_{S\cup\{i\}}) - U(\brho_{S}) = 0 \;\;\forall S \subset [N] \backslash \{i\} \implies \phi_i(\brho) = 0$. 
    \item \textbf{The sum of all payments is an $\boldsymbol{\alpha}(\beps)$ fraction of the total utility}. The sum of values is the total utility $\alpha(\brho)U(\brho) = \sum_i \phi_i(\brho)$. Where if $U(\brho) = U(\Tilde{\brho})$ then $\alpha(\brho) = \alpha(\Tilde{\brho})$ and $0 \leq \alpha(\brho) \leq 1 $.
    \item \textbf{If two utility functions are combined, the payment for the combined task should be the sum of the individual tasks}. Let $\phi_{i}(\brho)$ be the value of users for the utility function $U$, under the $\beps$-private algorithm $A_{\brho}$. Let $V$ be a separate utility function,
    also based on the output of the algorithm $A_{\beps}$, and let  $\phi'_{i}(\brho)$ be the utility of the users with respect to $V$. Then under the utility $U + V$, the value of user $i$ is $\phi_i(\brho) + \phi_i'(\brho)$.
\end{enumerate}

A notable difference between these axioms and (A.i-iii) is that the efficiency condition is replaced with pseudo-efficiency. Under this condition, the platform may determine the sum of payments awarded to the players, but this sum should in general depend only on the utility itself, and not on how that utility is achieved. 

\begin{theorem}\label{thm:fairnes2}
Let $\phi_i(\brho)$ satisfying axioms \em (B.i-iii) \em represent the portion of total utility awarded to each user $i$ from utility $U(\brho)$. Then for $\alpha({\brho})$ that satisfies axiom \em (B.ii)\em  $\;\phi_i$ takes the form:
\begin{equation}\label{eq:shap_value_standard}
    \phi_i(\brho) = \frac{\alpha(\brho)}{N}\sum_{S \subseteq [N]\backslash\{i\}} \frac{1}{\binom{N-1}{\abs{S}}}\left(U(\brho_{S\cup\{i\}}) - U(\brho_{S})\right).
\end{equation}
\end{theorem}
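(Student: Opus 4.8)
The plan is to mirror the classical Shapley uniqueness argument, adapted to the pseudo-efficiency axiom (B.ii). The key observation is that axioms (B.i) and (B.iii) are exactly the symmetry/null-player and additivity axioms of the standard cooperative-game setup, applied to the set function $v_S \defeq U(\brho_S)$ on the player set $[N]$ with $v_\emptyset = U(\vzero)$ (which we may take to be $0$, or absorb as a constant that the null-player axiom forces to contribute nothing). Only the efficiency requirement differs: instead of $\sum_i \phi_i = v_{[N]}$ we have $\sum_i \phi_i = \alpha(\brho)\, v_{[N]}$, where $\alpha(\brho)$ depends on $\brho$ only through the value $U(\brho)$. So the strategy is: first prove uniqueness assuming the stated form, then verify the stated form satisfies all three axioms.

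\medskip
\noindent\textbf{Step 1 (Basis decomposition).} Fix $\brho$ and regard $U(\brho_S)$, $S \subseteq [N]$, as a function on subsets. Expand it in the basis of ``unanimity games'' $w_T$, where $w_T(S) = 1$ if $T \subseteq S$ and $0$ otherwise: $U(\brho_{(\cdot)}) = \sum_{T \subseteq [N]} c_T\, w_T$ for unique coefficients $c_T$ (the Möbius/Harsanyi coefficients, $c_T = \sum_{R \subseteq T}(-1)^{|T|-|R|}U(\brho_R)$). This step is routine linear algebra.

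\medskip
\noindent\textbf{Step 2 (Pin down $\phi$ on each unanimity game).} For the game $\beta\, w_T$ ($\beta$ a scalar, $T\neq\emptyset$): every player $i \notin T$ is a null player, so (B.i) forces $\phi_i = 0$; all players in $T$ are mutually interchangeable, so (B.i) forces them to share equally; and (B.ii) forces the sum over $i\in T$ to equal $\alpha\cdot \beta\, w_T([N]) = \alpha\beta$ (here $\alpha$ is evaluated at the $\brho$ in question — this is the one delicate point, see below). Hence $\phi_i(\beta w_T) = \alpha\beta/|T|$ for $i\in T$ and $0$ otherwise. By additivity (B.iii), $\phi_i(\brho) = \sum_{T\ni i} c_T\, \alpha(\brho)/|T|$. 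A standard counting identity — grouping the $T$'s containing $i$ by their size and rewriting $c_T$ in terms of marginal contributions — converts this into the claimed formula $\frac{\alpha(\brho)}{N}\sum_{S\subseteq[N]\setminus\{i\}}\binom{N-1}{|S|}^{-1}\bigl(U(\brho_{S\cup\{i\}})-U(\brho_S)\bigr)$; this is exactly the combinatorial manipulation in the proof of Theorem~\ref{thm:fairnes1} (or in the classical Shapley proof) with the extra constant factor $\alpha(\brho)$ carried along, so I would either cite that or relegate it to the appendix.

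\medskip
\noindent\textbf{Step 3 (Existence).} Conversely, check the displayed formula satisfies (B.i)--(B.iii). Symmetry and the null-player property are immediate from the symmetry of the coefficients $\binom{N-1}{|S|}^{-1}$ and the fact that a null player has all marginal contributions zero. Additivity is linear in $U$, hence clear — but one must note that the $\alpha$ appearing must be the \emph{same} for $U$, $V$, and $U+V$; this is consistent because (B.ii) ties $\alpha$ to the utility value and additivity of $\phi$ is what (B.iii) demands, so one takes $\alpha$ as given by (B.ii) for each utility and checks the sums add up — this is where the hypothesis ``$U(\brho)=U(\tilde\brho)\Rightarrow\alpha(\brho)=\alpha(\tilde\brho)$'' is used to keep $\alpha(\brho)$ well-defined. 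Finally pseudo-efficiency: summing the formula over $i$ and using the hockey-stick identity $\sum_{i}\sum_{S\subseteq[N]\setminus\{i\}}\binom{N-1}{|S|}^{-1}(U(\brho_{S\cup\{i\}})-U(\brho_S))$ telescopes to $N\,U(\brho)$ (same computation as in the classical efficiency proof), giving $\sum_i \phi_i(\brho) = \alpha(\brho) U(\brho)$ as required.

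\medskip
\noindent\textbf{Main obstacle.} The only genuinely non-mechanical issue is the interplay between additivity (B.iii) and the $\brho$-dependence of $\alpha$: because $\alpha$ is allowed to vary with the utility (via its value), one must be careful that the uniqueness argument in Step 2 never implicitly assumed $\alpha$ is a fixed scalar independent of the game. The clean way around this is to fix $\brho$ throughout — so $\alpha(\brho)$ is a single number — run the entire unanimity-basis argument at that fixed $\brho$, and only afterwards let $\brho$ vary; the hypothesis that $\alpha$ depends on $\brho$ only through $U(\brho)$ is precisely what guarantees the resulting $\phi_i$ is a well-defined function of $\brho$. Everything else is a transcription of the classical Shapley derivation with a constant $\alpha(\brho)$ riding along.
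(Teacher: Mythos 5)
Your proposal is correct and follows essentially the same route as the paper's own proof: direct verification of (B.i)--(B.iii) for the displayed formula (with the combinatorial identity giving pseudo-efficiency), and uniqueness via decomposition into unanimity games plus the symmetry, null-player, and additivity axioms. Your explicit handling of the $\brho$-dependence of $\alpha$ (fixing $\brho$ so that $\alpha(\brho)$ is a single scalar throughout the unanimity argument) is in fact slightly more careful than the paper, which passes over that point silently.
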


The proof of Theorem~\ref{thm:fairnes2} can be found in Appendix \ref{app:shap}. This result is similar to the classic Shapley value \citep{shapley1952}, but differs in its novel asymmetric treatment of the platform.

\paragraph{Computational Complexity} At first glance it may seem that both notions of fairness have exponential computational complexity of $N\abs{\mathcal{E}}^N$. This is only true for a worst-case exact computation. In order for these notions to be useful in any meaningful way, we must be able to compute them. Thankfully, in practice, $U$ typically has a structure that makes the problem more tractable. In \cite{ghorbani19}, \cite{jia19}, \cite{Wang2023} and \cite{Lundberg2017}  special structures are used to compute the types of Shapley value sums we are considering with significantly reduced complexity, particularly in cases where the $U$ is related to the accuracy of a deep network. This is critical because we want to compute fair values for large number of users. For example, our platform could be a medical data network with hundreds of hospitals as our users, or a smartphone company with millions of users, and we need to be able to scale computation to accurately compute these fair values.

\paragraph{Example: Differentially Private Estimation}\label{sec:example3}
In this example, we use DP as our heterogeneous privacy framework. Let $X_i$ represent independent and identically distributed data of user $i$ respectively, with $\text{Pr}(X_i = 1/2) = p$ and $\text{Pr}(X_i = -1/2) = 1-p$, with $p \sim \text{Unif}(0,1)$.
The platform's goal is to construct an $\beps$-DP estimator for $\mu \defeq \mathbb{E}[X_i] = p-1/2$ that minimizes Bayes risk. There is no general procedure for finding the Bayes optimal $\beps$-DP estimator, so restrict our attention to $\beps$-DP linear-Laplace estimators of the form:
\begin{equation}\label{eq:lin_lap_est}
    A(\mathbf{X}) = \mathbf{w}(\beps)^T \mathbf{X} + Z,
\end{equation}
where $Z \sim \text{Laplace}(1/\eta(\beps))$. 
In \cite{Fallah2022} the authors argue that unbiased linear estimators are nearly optimal in a minimax sense for bounded random variables.
We assume a squared error loss $L(a,\mu) = (a-\mu)^2$ and let $\mathcal{A}_{\text{lin}}(\beps)$ be the set of $\beps$-DP estimators
satisfying \eqref{eq:lin_lap_est}. 
Then, we define:
\begin{equation}
    A_{\beps} = \argmin_{A \in \mathcal{A}_{\text{lin}}(\beps)} \mathbb{E}[L(A(\mathbf{X}), \mu)]\quad\quad
    r(\beps) = \mathbb{E}[L(A_{\beps}(\mathbf{X}), \mu)].
\end{equation}
In words, $A_{\beps}$ is an $\beps$-DP estimator of the form \eqref{eq:lin_lap_est}, where $\mathbf{w}(\mathbf{\beps})$ and $\eta(\beps)$ are chosen to minimize the Bayes risk of the estimator, and $r(\beps)$ is the risk achieved by $A_{\beps}$. 
Since the platform's goal is to accurately estimate the mean of the data, it is natural for the utility $U(\beps)$ to depend on $\beps$ through the risk function $r(\beps)$. Note that if $U$ is monotone decreasing in $r(\beps)$, then $U$ is monotone increasing in $\beps$.
Let us now consider the case of $N=2$ users, choosing from an action space of $\mathcal{E} = \{0, \eps'\}$, for some $\eps' > 0$. Furthermore, take $U$ to be an affine function of $r(\beps)$:
$U(\beps) = c_1 r(\beps) + c_2$. For concreteness, take $U(\mathbf{0}) = 0$ and $\sup_{\beps \in \mathbb{R}}U(\beps) = 1$. Note that this ensures that $U$ is monotone increasing in $\beps$, and is uniquely defined.
Considering the example of a binary privacy space $\mathcal{E} = \{0, \infty\}$ ($\eps' = \infty$),  the utility can be written in matrix form as:
\begin{equation}\label{eq:utility_matrix_eq}
    \mathbf{U} =     
    \begin{bmatrix}
    U([0,0]) & U([0,\eps']) \\
    U([\eps',0]) & U([\eps', \eps'])
    \end{bmatrix}=
    \begin{bmatrix}
    0 & 2/3 \\
    2/3 & 1
    \end{bmatrix}.
\end{equation}
Derivations are available in Appendix \ref{app:example_proof}. Note from \eqref{eq:shap_plat_full} and \eqref{eq:shap_user_full}, it is clear that $\phi_p(0, \beps) = \phi_i(0, \beps) = 0$. Let $\mathbf{\Phi}_p$ and $\mathbf{\Phi}^{(1)}_i$ represent the functions $\phi_p(1, \beps)$ and $\phi_i(1, \beps)$ in matrix form akin to $\mathbf{U}$. Then using \eqref{eq:shap_plat_full} and \eqref{eq:shap_user_full}, we find that the fair allocations of the utility are given by:
\begin{equation}\label{eq:matrix_vals_ex_thm1}
    \mathbf{\Phi}_p =
    \begin{bmatrix}
    0 & 1/3 \\
    1/3 & 5/9
    \end{bmatrix},\;
    \mathbf{\Phi}^{(1)}_1 =
    \begin{bmatrix}
    \;0 & 1/3 \\
    \;0 & 2/9
    \end{bmatrix},\;
    \mathbf{\Phi}^{(1)}_2 =
    \begin{bmatrix}
    0 & 0 \\
    1/3 & 2/9
    \end{bmatrix}.
\end{equation}

Consider the utility function defined in \eqref{eq:utility_matrix_eq}, for the $N=2$ user mean estimation problem with $\mathcal{E} = \{0, \infty\}$. By Theorem~\ref{thm:fairnes2} the fair allocation satisfying (B.i-iii) must be of the form:
\begin{equation}
        \mathbf{\Phi}^{(2)}_1 = \mathbf{A} \odot \begin{bmatrix}
    0 & 2/3  \\
    0 & 1/2
\end{bmatrix},
\quad
\mathbf{\Phi}^{(2)}_2 = \mathbf{A} \odot \begin{bmatrix}
    0 & 0  \\
    2/3 & 1/2
\end{bmatrix}
,\;\mathbf{A} = \mathbf{A}^T, \;\; 0 \leq [\mathbf{A}]_{ij}\leq 1.
\end{equation}

\section{Fair Incentives in Federated Learning}\label{sec:fed_learn}

FL is a distributed learning process used when data is either too large or too sensitive to be directly transferred in full to the platform.
Instead of combining all the data together and learning at the platform, each user performs some part of the learning locally and the results are aggregated at the platform, providing some level of privacy.
\cite{Donahue2021} consider a setting where heterogeneous users voluntarily opt-in to federation. A natural question to ask is: how much less valuable to the platform is a user that chooses to federate with others as compared to one that provides full access to their data? 
Furthermore, how should the platform allocate incentives to get users to federate? This section addresses these questions.

Each user $i \in [N]$ has a unique mean and variance $(\theta_i, \sigma_i^2) \sim \Theta$, where $\Theta$ is some global joint distribution. 
Let $\theta_i$ represent some information about the user critical for advertising. We wish to learn $\theta_i$ as accurately as possible to maximize our profits, by serving the best advertisements possible to each user. 
\begin{figure}[ht]
    \centering
    \includegraphics[width=0.35\textwidth]{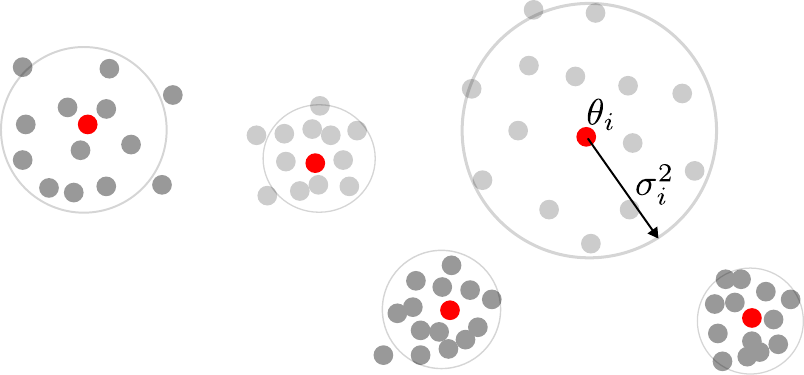}
    \caption{Each user $i \in [N]$ has mean and variance $(\theta_i, \sigma_i^2) \sim \Theta$, where $\Theta$ is a global joint distribution. Let $s^2 = \mathrm{Var}(\theta_i)$ and $r^2 = \mathbb{E}[\sigma_i^2]$ for all $i$. In this case $s^2$ is large relative to $r^2$, and the data is very heterogeneous.}
    \label{fig:enter-label}
\end{figure}
User $i$ draws $n_i$ samples i.i.d. from its local distribution $\mathcal{D}_i(\theta_i, \sigma^2_i)$, that is, some distribution with mean $\theta_i$ and variance $\sigma^2_i$. 
Let $s^2 = \mathrm{Var}(\theta_i)$ represent the variance between users and $r^2 = \mathbb{E}[\sigma_i^2]$ represent the variance within a user's data. When $s^2 \gg r^2/n_i$ the data is very heterogeneous, and it is generally not helpful to include much information from the other users when estimating $\theta_i$, however, if $s^2 \ll r^2/n_i$, the situation is reversed, and information from the other users will be very useful.
The goal of the platform is to construct estimators $\hat{\theta}^p_i$ while respecting the privacy level vector $\brho$: 
\begin{equation}
\mathrm{EMSE}_i(\brho) \defeq  \mathbb{E}  \left[ \left( \hat{\theta}_i^p(\brho) - \theta_i\right)^2\right].
\end{equation} 
Fig.~\ref{fig:fed_learn} summarizes our FL formulation. Users can choose from a $3$-level privacy space $\mathcal{E} = \{0, 1, 2\}$. In this case the privacy space is not related to DP, but instead encodes how users choose to share their data with the platform. Let $N_j$ be the number of users that choose privacy level $j$. The \emph{heterogeneous privacy framework} is given in Table~\ref{tbl:priv_level}.
\begin{table}[h]
    \centering
    \begin{tabular}{lll}
        \toprule
         Level & Description & Platform gets\\
         \midrule
        $\rho_i = 2$ & Provide local estimator directly to the platform. & $\hat{\theta}_i$ \\
        $\rho_i = 1$ & Provide securely aggregated model with other users of same privacy level. &  $\hat{\theta}^f = \frac{1}{N_1} \sum_{i : \rho_i = 1}\hat{\theta}_i$\\
        $\rho_i = 0$ & Provide no data to the platform. & Nothing \\
        \bottomrule
    \end{tabular}
    \caption{Privacy Level Description} \label{tbl:priv_level}
\end{table}
Note that the error in estimating $\theta_i$ depends not just on the privacy level of the $i$th user $\rho_i$, but on the entire privacy vector. 
Let the users be ordered such that $\rho_i$ is a non-increasing sequence. Then for each $i$ the platform constructs estimators of the form:
\begin{equation}\label{eq:p_estimator}
    \hat{\theta}_i^p = w_{i0}\hat{\theta}^f + \sum_{j=1}^{N_2}w_{ij}\hat{\theta}_j,
\end{equation}
where, $\sum_j w_{ij} = 1$ for all $i$. In Proposition~\ref{lem:minimization}, found in Appendix \ref{apdx:federror}, we calculate the optimal choice of $w_{ij}$ which depends on $\brho$. 
From these estimators, the platform generates utility $U(\brho)$. The optimal $w_{i0}$ and $w_{ij}$ in \eqref{eq:p_estimator} are well defined in a Bayesian sense if $\rho_i>0$ for some $i$, but this does not make sense when $\brho = \boldsymbol{0}$. We can get around this by defining $\mathrm{EMSE}_i(\mathbf{0}) \defeq r^2 + 2s^2$. For the purposes of our discussion, we assume a logarithmic form utility function. This logarithmic form is common in utility theories dating back at least to \cite{kelly1956new}. In the following section, we make a \emph{diminishing returns} assumption to derive our theoretical result, which the logarithmic utility satisfies. The exact utility function we consider is:
\begin{equation}\label{eq:log_utility_fed}
    U(\brho) \defeq \sum_{i=1}^{n}a_i \log\left(\frac{(r^2 + 2s^2)}{\mathrm{EMSE}_i(\brho)}\right).
\end{equation}
\textbf{$a_i$ represents the relative importance of each user}. This is important to model because some users may spend more than others, and are thus more important to the platform i.e., the platform may care about computing their $\theta_i$ more accurately than the other users. The argument of the $\log$ is increasing as the EMSE decreases. The $\log$ means there is diminishing returns as each $\hat{\theta}_i$ becomes more accurate.
\begin{figure}[ht]
    \centering
    \begin{subfigure}{0.47\textwidth}
    \centering
    \includegraphics[width=\textwidth]{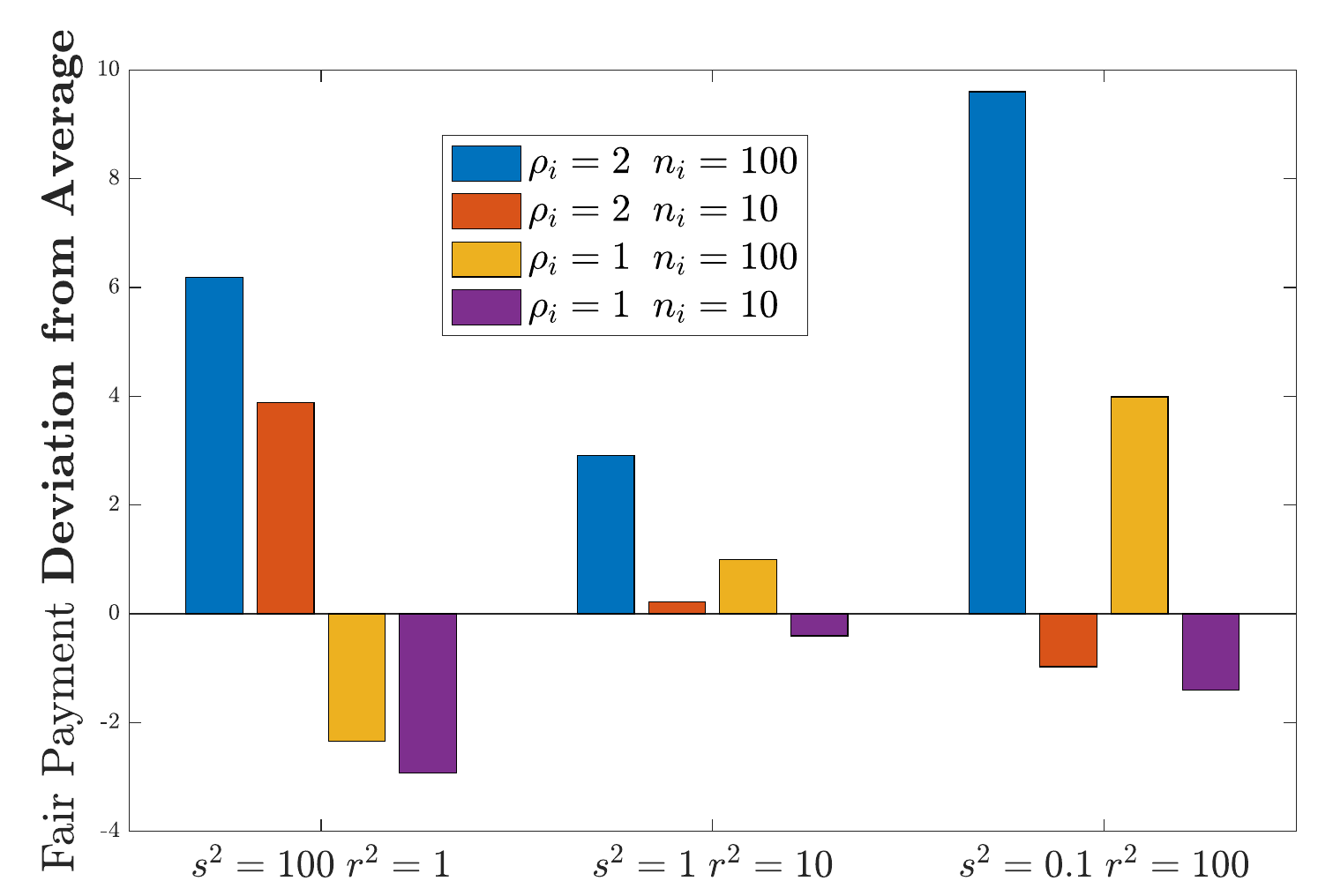}
    \caption{}
    \label{fig:example_1}
    \end{subfigure}
     \hfill
     \begin{subfigure}{0.515\textwidth}
    \includegraphics[width=\textwidth]{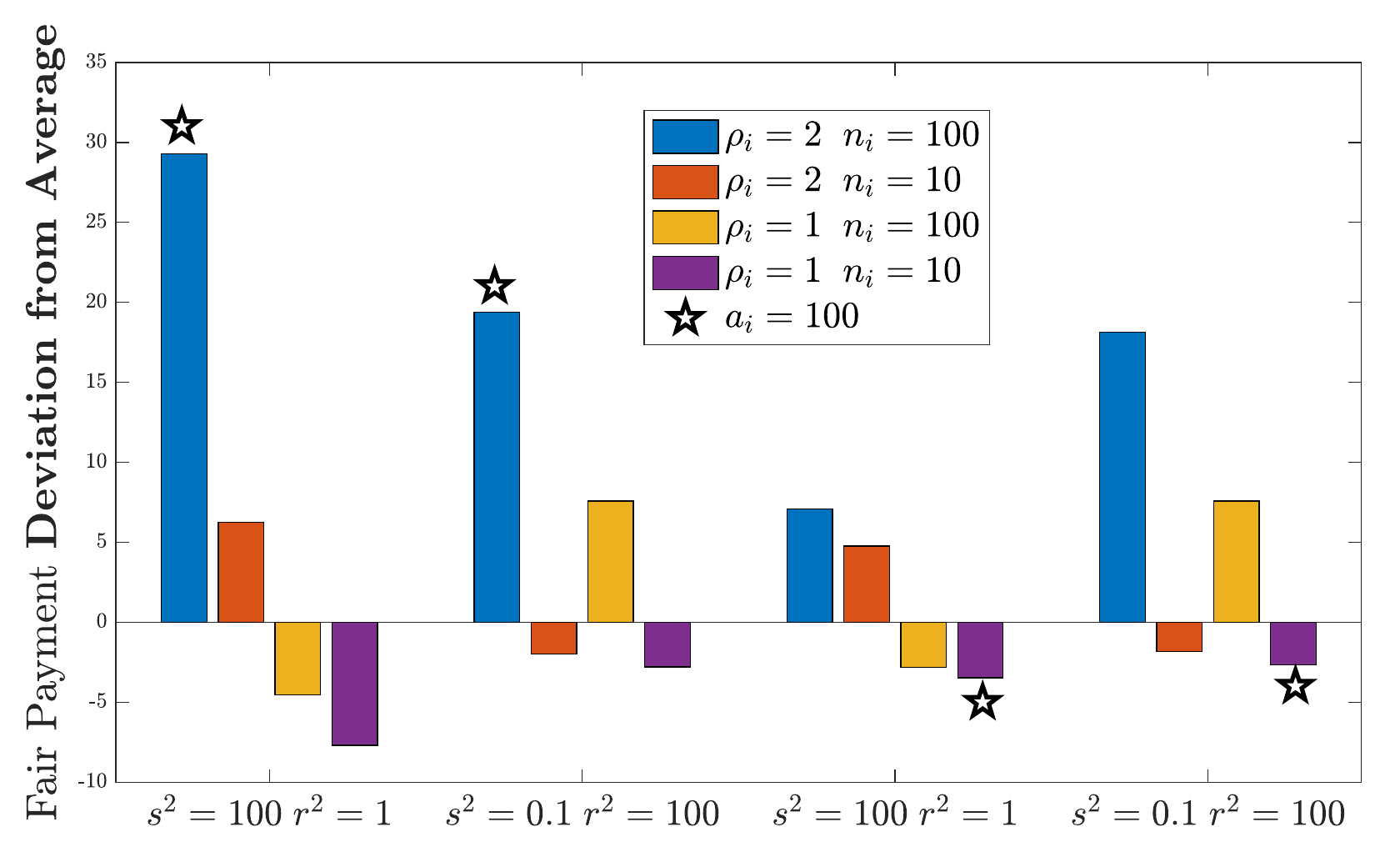}
    \caption{}
    \label{fig:example2}
     \end{subfigure}
     \caption{\textbf{(a)} Plot of \emph{difference from the average} utility per user $U(\brho)/N$ for each of the four different types of users, for three different regimes of $s^2 = \mathrm{Var}(\theta_i)$ and $r^2 = \mathbb{E}[\sigma_i^2]$, with heterogeneity decreasing from left to right. In left (most heterogeneous) plot users who choose $\rho_i = 2$ are more valuable compared to those that choose $\rho_1 = 1$. In the center there is an intermediate regime, where all users are paid closer to the average, with users with more data being favored slightly. In the rightmost graph, with little heterogeneity users with more data are paid more, and privacy level has a lesser impact on the payments.\\ \textbf{(b)} In each case there is one user $i$ with $a_i = 100$ (indicated with a star), while all other users $j \neq i$ have $a_j = 1$ ( $a_i$ represents the relative importance of the user in the utility function). In the two leftmost set of bars, we see that the user with $\rho_i=2$ and $n_i=100$ receives by far the most payment, when heterogeneity is high, but this becomes less dramatic as heterogeneity decreases. This shows that when users are very heterogeneous, if $a_i$ is large for only user $i$, most of the benefit in terms of additional payments should go to user $i$. Likewise, comparing the second from the left and the rightmost plots we see little difference, showing that the opposite is true in the homogeneous case: any user can benefit from any other user having a large $a_i$.}\label{fig:example}
\end{figure}

\subsection{Fair Payments Under Optional Federation}

In this section, we focus on our definition of fairness in Theorem~\ref{thm:fairnes2} and analyze the fair values  $\phi_i(\brho)$ that are induced when using that notion of fairness. 
Let there be $N=10$ users. $N_1 = 5$ of these users opt for federating ($\rho_i = 1$), $N_2 = 4$ directly provide their data to the platform ($\rho_i=2$), and finally, $N_0=1$ user chooses to not participate ($\rho_i = 0$). In this subsection (and Fig. \ref{fig:example}), without loss of generality, we assume $\alpha(\brho) = 1$, and the results of this section can be scaled accordingly. The choices of $\rho_i$ for each user depends on their individual privacy sensitivity functions $c_i$, but we defer that discussion to the next subsection.

\paragraph{Different Amounts of Data}
Fig~\ref{fig:example_1} plots the difference from an equal distribution of utility, i.e., how much each user's utility differs from $U(\brho)/N$. We assume $a_i = 1$ for all users. 
In the bars furthest to the left, where $s^2 = 100$ and $r^2 = 1$, we are in a very heterogeneous environment. Intuitively, this means that a user $j$ will have data that may not be helpful for estimating $\theta_i$ for $j \neq i$, thus those users that choose $\rho_i = 2$ are paid the most, since at the very least, the information they provide can be used to target their own $\theta_i$. Likewise, users that federate obfuscate where their data is coming from, making their data less valuable (since their own $\theta_i$ cannot be targeted), so users with $\rho_i = 1$ are paid less than  an even allocation.
On the right side, we have a regime where $s^2 = 0.1$ and $r^2 = 100$, meaning users are similar and user data more exchangeable. Now users with larger $n_i$ are paid above the average utility per user, while those with lower $n_i$ are paid less. Users with $\rho_i = 2$ still receive more than those with $\rho_i = 1$ when $n_i$ is fixed, and this difference is significant when $n_i = 100$. 
In the center we have an intermediate regime of heterogeneity, where $s^2 = 1$ and $r^2 = 10$. Differences in payments appear less pronounced, interpolating between the two extremes.

\paragraph{More Valuable Users}
Fig~\ref{fig:example2} is like Fig~\ref{fig:example_1}, except now in each set of graphs, exactly one user has $a_i=100$, meaning that estimating $\theta_i$ for user $i$ is $100$ times more important than the others. Looking at the two leftmost sets of bars in Fig~\ref{fig:example2} we see that when user $i$ with $\rho_i = 2$ and $n_i = 100$ is the most important one, when $s^2$ is large compared to $r^2$, it is user $i$ who receives most of the benefit in terms of its payment but when $s^2$ is smaller, other users also benefit. This can be intuitively explained as follows: if users are very heterogeneous, other users $j \neq i$ do not have data that is helpful for determining $\theta_i$, thus they do not benefit when user $i$ has a larger $a_i$. Likewise, when $s^2$ is small compared to $r^2$ not just user $i$ benefits, but also all those users that contribute more data, as those users with $\rho_i = 1$ and $n_i=100$ are also paid over the average utility per user. 
Another key point is the similarity between the second and fourth set of graphs. This tells an interesting story: when users are not very heterogeneous, regardless of which user is has $a_i = 100$, it is those users with large $n_i$ that will benefit.

\subsection{Platform and User Game - Mechanism Design}
\begin{figure}[ht]
    \centering
    \includegraphics[width=\textwidth]{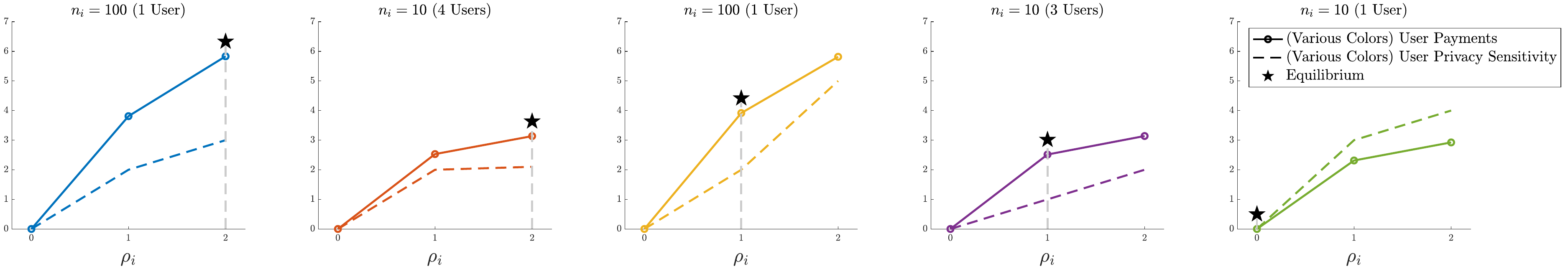}
    \caption{Plot of payments and user sensitivities $c_i$ for 5 different types of users with different sensitivity functions. Some user types are repeated, (indicated in the title), such that there is a total of $N = 10$ Users.  Users choose the privacy level $\rho_i$ to maximize the difference between their payment and privacy sensitivity function. A star in each plot marks each user's top choice.  The payment functions $\phi_i(\brho)$ changes based on the privacy levels of all the users. At the given configuration, which matches the left-most plot in Fig~\ref{fig:example2} ($s^2 = 100, r^2 = 1$), no user benefits by changing their choice of $\rho_i$, thus the configuration is a Nash Equilibrium.}
    \label{fig:nash_eq}
\end{figure}
\begin{figure}[ht]
    \centering
    \begin{subfigure}{0.49\textwidth}
    \centering
    \includegraphics[width=\textwidth]{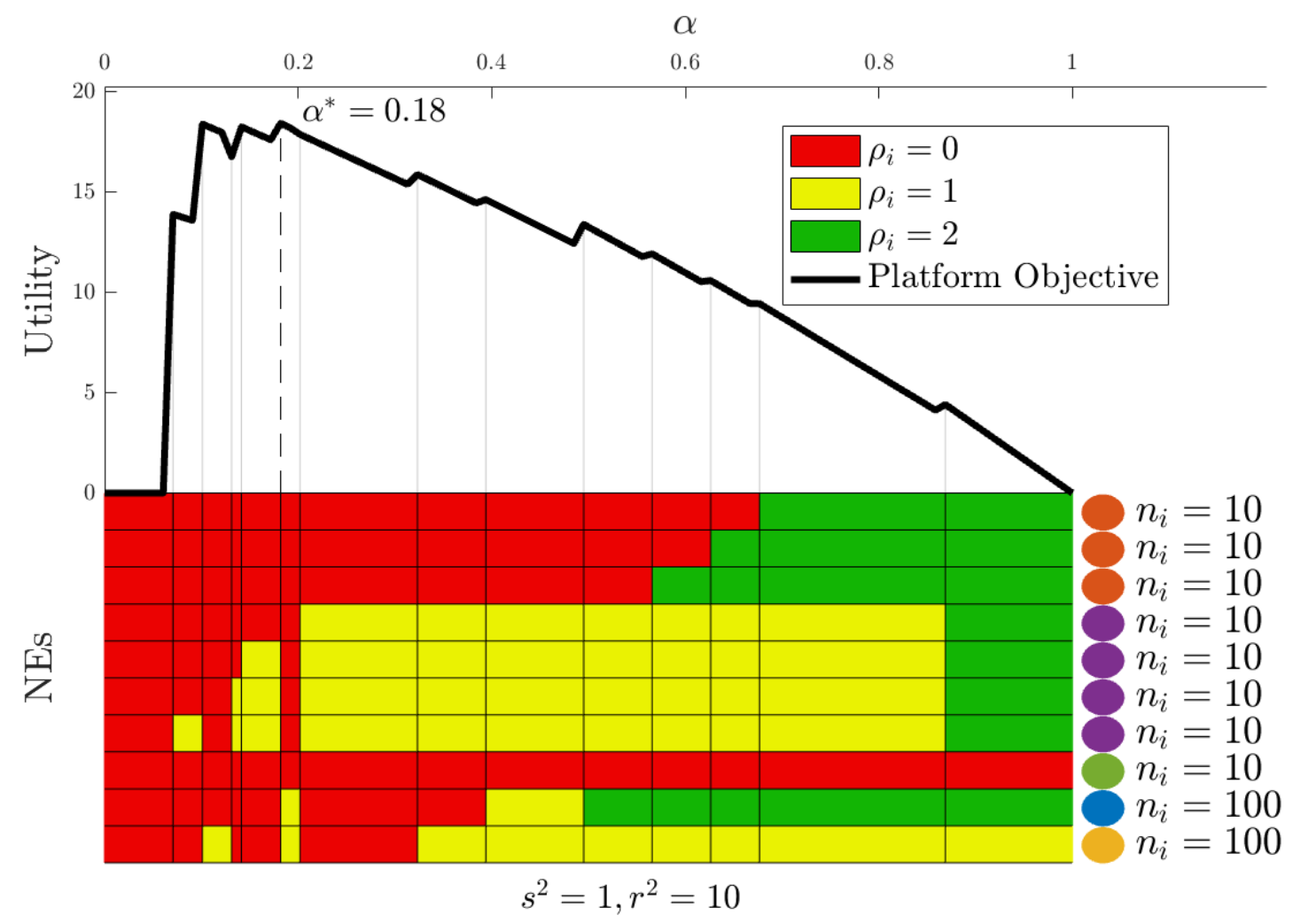}
    \caption{Homogeneous}
    \label{fig:opt1}
    \end{subfigure}
     \hfill
     \begin{subfigure}{0.49\textwidth}
    \includegraphics[width=\textwidth]{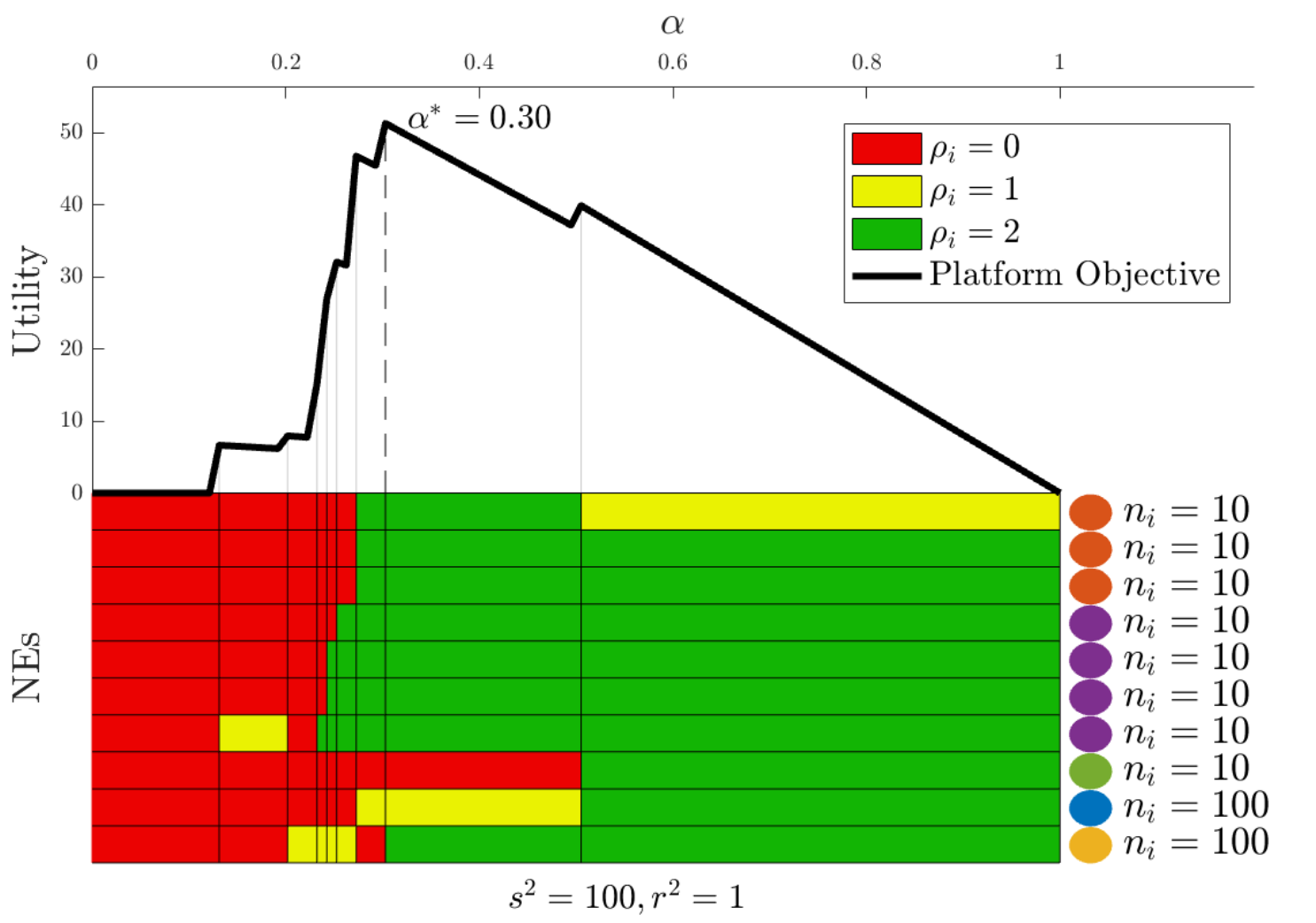}
    \caption{Heterogeneous}
    \label{fig:opt2}
     \end{subfigure}
     \caption{\textbf{(a) and (b)} The top of both sub-figures plot the optimization landscape for the platform design problem \eqref{eq:resource_allocate_fed}.The solid black line depicts the partial solution to \eqref{eq:resource_allocate_fed}, where we optimize over $\brho$ for fixed $\alpha$. Precisely that is: $\max_{\brho} (1 - \alpha)U(\brho),\;\brho \in \text{NE}(\alpha)$. The bottom of each plot depicts the optimizing $\argmax_{\brho} (1 - \alpha)U(\brho),\;\brho \in \text{NE}(\alpha)$. $\rho_i=0$ means that the users send no data to the platform, $\rho_i=1$ means the users securely aggregate their data with other users, and $\rho_i=2$ means the users send all their data to the platform. Each row of bars corresponds to a unique user $i$. On the right-hand side, the amount of data each user has $n_i$ is given, as well as the color-code corresponding to the $c_i$ of each user. By looking at the same color plot in Fig~\ref{fig:nash_eq}, the $c_i$ for the given user can be found. If there are multiple NEs that achieve the maximum utility, one is shown arbitrarily. \\ \textbf{(a)} Is in a homogeneous regime. In this example it turns out that the optimal $\alpha^*$ for the platform is such that both of the users with $n_i=100$ choose $\rho_i=1$, while the other users choose $\rho_i=0$, showing the importance of the amount of data, as opposed to higher privacy options. \\
     \textbf{(b)} Is in a more heterogeneous setting. We find that the optimal $\alpha^*$ ensures that most users choose $\rho_i = 2$, except one user with high privacy sensitivity that chooses $\rho_i =0$, and one user that chooses $\rho_i=1$.}
     \label{fig:opt}
\end{figure}
\paragraph{Nash Equilibrium Under Fair Payments}

We have just discussed how the fair values $\phi_i(\brho)$ change depending on the model parameters, and the choices of privacy level $\rho_i$. Now we discuss \emph{how} the users come to decide their $\rho_i$. We again focus on the notion of privacy in Theorem~\ref{thm:fairnes2}, but we now restrict $\alpha(\brho) = \alpha \in [0,1]$. To avoid overly complicating the model, we take $a_i=1$ for all users. User $i$ chooses their privacy level $\rho_i$ based on both the payments that they receive from the platform $\alpha \phi_i(\rho_i;\brho_{-i})$ as well as their own unique privacy sensitivity function $c_i(\rho_i)$. 
In this context, we can view each user as optimizing their own local utility function:
\begin{equation}
    u_i(\rho_i,\brho_{-i}) \defeq \alpha \phi(\rho_i;\brho_{-i}) - c_i(\rho_i).
\end{equation}
User $i$ individually optimizes this function to determine their \emph{best response} (denoted $\bestr_i$) to the choices of the other users and the platform:
\begin{equation} \label{eq:user_util}
\bestr_i(\brho_{-i}, \alpha) \defeq \argmax_{\rho_i} u_i(\rho_i,\brho_{-i})
\end{equation}
The full \emph{best response function} $\bestr(\brho,\alpha) \defeq \left[ \bestr_1(\brho_{-1}, \alpha), \dots, \bestr_N(\brho_{-N}, \alpha)\right]^T$ is a vector that collects all users' individual best response functions. The ``fixed points'' of the best response at a given $\alpha$ constitute the pure-strategy NEs:
\begin{equation}
    \text{NE}(\alpha) \defeq \left\{ \brho : \brho = \bestr(\brho,\alpha)\right\}.
\end{equation}
Fig~\ref{fig:nash_eq} depicts a particular fixed point of the best response function (therefore a NE) in our federated mean estimation example. Each of the $N=10$ users in assigned one of 5 unique sensitivity functions $c_i$, which are shown in the figure. 
\paragraph{Solving the Fair Data Acquisition Problem}
The platform's goal is to set $\alpha$ such that it maximizes the total amount of utility it receives. Since the NE set depends on $\alpha$ the platform has some control over the behavior of the users. The mechanism design problem in this case reduces to:
 \begin{maxi}<b > 
{\alpha, \brho }{ (1 - \alpha)U(\brho)}{}{}
\addConstraint{ \brho \in \text{NE}(\alpha)}{}{}.
\label{eq:resource_allocate_fed}
\end{maxi}
Note that this is exactly \eqref{eq:resource_allocate_pure}, except we have used the fact that $\alpha(\brho)U(\brho) = \sum_i \phi_i(\brho)$. Solving \eqref{eq:resource_allocate_fed} is a daunting task, due to the complex structure of the constraint, however, it is numerically tractable in this case.  The key idea to efficiently solve \eqref{eq:resource_allocate_fed} is to exploit symmetries in the utility function. For instance, if $\brho'$ is constructed from $\brho$ by permuting the values $\rho_i, \rho_j$ where $n_i=n_j$, then $U(\brho) = U(\brho')$. This allows us compute $\phi_i(\brho)$ for a much smaller number of representative $\brho$, rather than a full combinatorial search. Despite producing the same utility, $\brho$ and $\brho'$ may have very different stability properties. To deal this this, we can implement an efficient tree search to identify the NEs within each group. Once we have an efficient algorithm for computing NEs, we conduct a grid search to determine the optimal $\alpha$. Note that for an arbitrary problem, finding the equilibria can be a challenging task, and thus the more difficult it is to compute the equilibria, the more difficult it will be to solve the design problem in \eqref{eq:resource_allocate_pure}. We provide a full description of our solution in Appendix~\ref{apdx:numerical_solution}.

Fig~\ref{fig:opt} shows the numerical solution to \eqref{eq:resource_allocate_fed} for two different choices of $s^2$ and $r^2$. Fig~\ref{fig:opt1} is a more homogeneous setting and we observe that the optimal $\alpha^*$ sets the payment just high enough so that both of the users with $n_i=100$ choose to participate at $\rho_i=1$, while all other users choose $\rho_i=0$. Essentially, the platform identifies the two users with $n_i=100$, and focused on collecting their data, rather than the data of the other 8 users.  This configuration collects a majority of the data for relatively little payment. Since the data is very homogeneous anyways, the platform is not losing much utility by allowing those two users to choose $\rho_i=1$ rather than $\rho_i=2$. As the platform increases $\alpha$, more users choose to participate, eventually with many of them choosing $\rho_i=2$, however, the benefit is minimal, and it is outweighed by the extra payments that are required to achieve that outcome. 
Fig~\ref{fig:opt2} covers a more heterogeneous setting. In this setting, we see that $\alpha^*$ is higher, and a larger fraction of the total utility is paid to users. For this $\alpha^*$ only two of the users do not choose $\rho_i=2$. There is also one user with $\rho_i=1$, and one with $\rho_i=0$ that have a higher privacy sensitivities. In the more heterogeneous setting, allowing users to choose a private option is more costly, so the extra payment to ensure more user choose $\rho_i=2$ is worthwhile in this regime.

\paragraph{Other Formulations} We conclude this section by remarking that this is just one particular formulation of a federated learning problem where users have privacy choice. Another interesting formulation can be found in \cite{aldaghri2023federated}, which comes from the literature on personalized federated learning \cite{li2021ditto}. In this formulation, all users join the federated learning process, but some users can choose between a private option, where their data is protected via differential privacy, or a standard non-private option. Exploring fair incentives in this, and other models could be an interesting direction for future work.

\section{Fairness Constraints: Data Acquisition}\label{sec:two_user_ex}

In the previous section, we considered a complex model, numerically studying the equilibria of the users based on fair payments from the platform, as well as how the platform optimally chooses $\alpha$.
In this section, we consider a tractable model and theoretically study how the optimal $\alpha^*$ changes based on the privacy sensitivity of users, under a fairness framework based on Theorem \ref{thm:fairnes2}.
This section addresses this problem by investigating the incentives of a platform designing a mechanism under the constraint of fairness.

Consider $N\geq2$ users each with identical statistical marginal contribution, i.e., for any $i, j$ we have $S \subseteq [N]\backslash\{i,j\}$, $U(\brho_{S\cup\{i\}}) = U(\brho_{S\cup\{j\}})$. The platform is restricted to making fair payments satisfying axioms (B.i-iii) with the additional constraint that $\alpha(\brho) = \alpha \in [0,1]$. Users choose one of two available privacy levels $\rho_i \in \mathcal{E}^{N}$, with $\mathcal{E} = \{\rho'_1, \rho'_2\}$ and $\rho'_2 > \rho'_1$.
We can write the utility of the user $i$ as
\begin{equation} \label{eq:user_util2}
u(\rho_i,\brho_{-i}) = \alpha \phi(\rho_i;\brho_{-i}) - c \ind{\rho_i = \rho'_2}.
\end{equation}
Users gain utility from incentives provided by the platform but incur a cost of $c$ if they choose the less private option. For now, we assume $c$ is the same for all users; later we discuss the case where $c$ is different. 
Note that we can drop the index of $\phi_i$ due to the assumption of equal marginal contribution. 
To enrich the problem, we allow users to employ a mixed strategy denoted by $\mathbf{p} = [p, (1-p)]^T$, where users choose the $\rho_1'$ with probability $p$ and $\rho_2'$ with probability $1-p$. This is justified because we expect users to repeatedly interact with platforms and sample from their mixed strategy and ultimately converge to their expected utility. 

The platform is also trying to maximize the fraction of the total expected utility $U(\mathbf{p}) \defeq \mathbb{E}_{\brho \sim \mathbf{p}}\left[ U(\brho)\right]$ that it keeps as in \eqref{eq:resource_allocate}.
The platform's goal is to choose a payment value $\alpha$ such that it optimizes:
 \begin{maxi}<b > 
{\alpha}{(1 - \alpha)U(\mathbf{p}^*(\alpha))}{}{}
\addConstraint{ \mathbf{p}^*(\alpha) \in \text{NE}(\alpha)}{}{}.
\label{eq:sym_plat_opt}
\end{maxi}
The objective is simplified compared to \eqref{eq:resource_allocate} by exploiting the pseudo-efficiency axiom, which says that the sum of payments is $\alpha$ times the total utility. The constraint in \eqref{eq:sym_plat_opt} implicitly encodes the user behavior governed by \eqref{eq:user_util2}, and will change with the privacy sensitivity $c$.
Theorem \ref{thm:n_user} characterizes the solution of \eqref{eq:sym_plat_opt} for different values of $c$.
To make \eqref{eq:sym_plat_opt} amenable to insightful analysis, we make some mild assumptions. 
\begin{assumption} \label{as:monotone}
    The utility $U$ is monotone:
       $ \brho_S^{(2)} \geq \brho_S^{(1)} \implies U(\brho_{S}^{(2)}) > U(\brho_{S}^{(1)}) \;\; \forall S \subseteq [N]$. 
\end{assumption}
\begin{assumption}\label{as:diminishing}
    The utility $U$ has diminishing returns. Let $n_{\text{private}}(\brho_{S})$ represent the number of elements of $i \in S$ such that $\rho_i = \rho_1'$, i.e., the number of users choosing the higher privacy option. Furthermore, define $\Delta_i U (\brho_S) \defeq U (\brho^{(i+)}_S) - U(\brho_S)$, where $\brho^{(i+)}_S$ is equal to $\brho_S$ except $\rho^{(i+)}_i = \rho'_2$. In other words, $\Delta_i U (\brho_S)$ is the marginal increase in utility when the $i$th user switches to the lower privacy option. Then $U$ satisfies:
    \begin{equation}
        n_{\text{private}}(\brho_{S}^{(1)}) \geq n_{\text{private}}(\brho_{S}^{(2)}) \implies \Delta_i U(\brho^{(1)}) > \Delta_i U(\brho^{(2)}).
    \end{equation}
\end{assumption}
It is helpful to define the \emph{expected relative payoff}, where the expectation is taken with respect to the actions of the other players. When all other users choose a mixed strategy $\mathbf{p}$, the expected relative payoff is defined as:
\begin{equation}
\gamma(p) \defeq \phi(\rho_2';\mathbf{p}) - \phi(\rho_1';\mathbf{p})  = \mathbb{E}_{\substack{\rho_j \sim \mathbf{p}\\j \neq i}} \left[\phi(\rho'_2;\brho_{-i}) - \phi(\rho'_1;\brho_{-i})\right].
\end{equation}
For convenience, we have defined $\gamma$ in terms of the scalar $p$, rather than the vector $\mathbf{p} = [p,\;(1-p)]^T$. This quantity represents the expected gain in incentive (normalized to make it invariant to $\alpha$) if a user switches to a less private level from the more private level given everyone else plays the mixed strategy $\mathbf{p}$. 

\begin{theorem}\label{thm:n_user}
    Consider a binary privacy level game with $N$ users and a platform. If $U$ satisfies Assumptions \ref{as:monotone} and \ref{as:diminishing}, and the platform payments are fair as defined in Theorem \ref{thm:fairnes2} with constant $\alpha$ then the optimal $\alpha^*$ can be divided into three regimes depending on $c$. The boundaries of these regions are $\gamma_{max} \defeq \max_{p} \gamma(p)$ and some $c_{th} < \gamma_{max}$ such that:
    \vspace{-5pt}
\begin{enumerate}
  \setlength{\itemsep}{3pt}
  \setlength{\parskip}{0pt}
    \item When $c > \gamma_{max}$, $\alpha^* = 0$ is the maximizer of \ref{eq:sym_plat_opt}.
    \item When $c_{th} < c < \gamma_{max}$ then  $\alpha^*$ is the minimizing $\alpha  \in [0,1]$ such that $p^*(\alpha) \in \gamma^{-1}(c/\alpha)$.
    \item When $c < c_{th}$: $\alpha^*$ is the smallest $\alpha \in [0,1]$ such that $p(\alpha)  = 0$, where
\end{enumerate}
 \begin{equation}\label{eq:c_th_eq}
 c_{th} = \max \left\{  c \left\lvert     \frac{1 - c/\gamma_{min}}{1 - \alpha} - \frac{U(p^*(\alpha))}{U(0)} \geq 0 \;\;\forall \alpha \leq c/\gamma_{min}\right.\right\}.
 \end{equation}
\end{theorem}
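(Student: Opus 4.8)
The plan is to remove the inner equilibrium constraint in \eqref{eq:sym_plat_opt} by writing the symmetric equilibrium $\mathbf{p}^*(\alpha)$ as an explicit function of $\alpha$, and then to maximize the resulting scalar objective $g(\alpha):=(1-\alpha)U(\mathbf{p}^*(\alpha))$ over $\alpha\in[0,1]$ by a piecewise analysis anchored at the two thresholds $c/\gamma_{max}$ and $c/\gamma_{min}$.

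\textbf{Step 1 (equilibrium correspondence).} When the other players use $\mathbf{p}=[p,1-p]^\top$, a user's expected gain from switching $\rho_1'\to\rho_2'$ equals $\alpha\gamma(p)-c$, by \eqref{eq:user_util2}, the definition of $\gamma$, and the equal--marginal--contribution hypothesis (which is exactly what lets us drop the player index on $\phi$); hence the best response is pure $\rho_1'$ when $\alpha\gamma(p)<c$, pure $\rho_2'$ when $\alpha\gamma(p)>c$, and indifferent when $\alpha\gamma(p)=c$. The structural lever is strict monotonicity of $\gamma$ in $p$: expanding $\phi$ through \eqref{eq:shap_value_standard}, $\gamma(p)$ is an average with $p$--independent weights of the increments $\Delta_i U$ over coalitions whose private--member count is stochastically monotone in $p$, so the diminishing--returns Assumption~\ref{as:diminishing} fixes the direction of monotonicity; Assumption~\ref{as:monotone} likewise makes $p\mapsto U(\mathbf{p})$ strictly monotone. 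Feeding monotone $\gamma$ into the best--response rule reads off the equilibrium for each $\alpha$: the all--$\rho_1'$ profile when $\alpha$ is below $c/\gamma_{max}$; a profile with $p=0$ when $\alpha$ is above $c/\gamma_{min}$; and in between the interior solution of $\gamma(p^*(\alpha))=c/\alpha$, i.e.\ $p^*(\alpha)\in\gamma^{-1}(c/\alpha)$. A subtle point is equilibrium selection on the range where several equilibria coexist; this is the single place where the precise sign of $\gamma'$ is used.

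\textbf{Step 2 (reduction; Regime 1).} Pseudo--efficiency (axiom B.ii with constant $\alpha$) gives $\sum_i\phi_i(\brho)=\alpha U(\brho)$, so \eqref{eq:sym_plat_opt} becomes $\max_{\alpha\in[0,1]}g(\alpha)$. If $c>\gamma_{max}$, then $\alpha\gamma(p)\le\gamma_{max}<c$ for all $\alpha\le1$ and all $p$, so the all--$\rho_1'$ profile is the unique equilibrium for every $\alpha$; thus $U(\mathbf{p}^*(\alpha))$ is independent of $\alpha$, $g$ is affine and non--increasing, and $\alpha^*=0$, which is Regime 1. \textbf{Step 3 (Regimes 2 and 3).} For $c<\gamma_{max}$, split $[0,1]$ at $c/\gamma_{max}$ and $c/\gamma_{min}$: on the outer two pieces $U(\mathbf{p}^*(\alpha))$ is constant, so $g$ is affine there and $\max_{[c/\gamma_{min},1]}g=g(c/\gamma_{min})=(1-c/\gamma_{min})U(0)$; on the middle piece, Step 1 makes $\alpha\mapsto U(\mathbf{p}^*(\alpha))$ monotone, and analyzing this against the $(1-\alpha)$ factor reduces the maximum of $g$ over $[0,c/\gamma_{min}]$ to the two endpoints $\alpha=c/\gamma_{max}$ and $\alpha=c/\gamma_{min}$. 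Comparing these two candidates is exactly the content of \eqref{eq:c_th_eq}: the inequality $\frac{1-c/\gamma_{min}}{1-\alpha}\ge\frac{U(p^*(\alpha))}{U(0)}$ for all $\alpha\le c/\gamma_{min}$ is precisely ``$g(c/\gamma_{min})\ge g(\alpha)$ everywhere to its left'', and $c_{th}$ is the largest $c$ for which it holds. Monotonicity of this condition in $c$ makes the defining set nonempty and bounded above, so $c_{th}$ is well defined with $0<c_{th}<\gamma_{max}$. Hence $c<c_{th}$ gives $\alpha^*=c/\gamma_{min}$, the smallest $\alpha$ with $p^*(\alpha)=0$ (Regime 3), and $c_{th}<c<\gamma_{max}$ gives $\alpha^*=c/\gamma_{max}$, the smallest $\alpha$ with $p^*(\alpha)\in\gamma^{-1}(c/\alpha)$ (Regime 2).

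\textbf{Main obstacle.} The hard part is Step 1 together with the middle--interval reduction in Step 3: translating Assumption~\ref{as:diminishing} into the precise way $\gamma$, hence $p^*(\alpha)$ and $U(\mathbf{p}^*(\alpha))$, move with $\alpha$; pinning down which equilibrium is played on the overlap region; and proving the trade--off in $(1-\alpha)U(\mathbf{p}^*(\alpha))$ never has an interior optimum strictly beating both endpoints, so that \eqref{eq:c_th_eq} genuinely characterizes $\alpha^*$. Everything else is the pseudo--efficiency identity and one--variable calculus.
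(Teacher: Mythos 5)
Your Steps 1 and 2 essentially track the paper's argument: the paper likewise shows $\gamma(p)\ge 0$ from Assumption~\ref{as:monotone} and that $\gamma$ is non-decreasing in $p$ from Assumption~\ref{as:diminishing} (carrying out the Shapley-form expansion of $\phi$ that you only sketch), obtains the same three-piece equilibrium map ($p^*(\alpha)=1$ for $\alpha<c/\gamma_{max}$, $p^*(\alpha)\in\gamma^{-1}(c/\alpha)$ in between, $p^*(\alpha)=0$ for $\alpha>c/\gamma_{min}$), and settles Regime 1 exactly as you do.

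The genuine gap is in Step 3. You claim that because $U(p^*(\alpha))$ is increasing and $(1-\alpha)$ is decreasing on the middle interval, the maximum of $g(\alpha)=(1-\alpha)U(p^*(\alpha))$ over $[0,c/\gamma_{min}]$ must sit at an endpoint, and you conclude that Regime 2 gives $\alpha^*=c/\gamma_{max}$ and that \eqref{eq:c_th_eq} is a two-point comparison. A product of a positive increasing and a positive decreasing function can peak strictly inside the interval, and the paper's own worked example (Appendix~\ref{apdx:exact_n_user_ex}) is a counterexample: there $\gamma(p)=\tfrac12+\tfrac{p}{6}$, $U(p)=1-\tfrac23 p-\tfrac13 p^2$, and for $c\in(\tfrac13,\tfrac23)$ the maximizer is $\alpha^*=\tfrac{6c}{3c+2}$, strictly interior to $[c/\gamma_{max},\,c/\gamma_{min}]=[\tfrac{3c}{2},\,2c]$; the left endpoint you nominate even yields zero platform utility there since $U(1)=0$. (You may have been led to this by the wording of Regime 2 in the theorem, but the paper's proof and example make clear the intended content is only that $\alpha^*$ lies in the mixed-equilibrium range, found by maximizing $g$ there.) The paper avoids your reduction entirely: it keeps the full quantifier in \eqref{eq:c_th_eq} --- your parenthetical reading ``$g(c/\gamma_{min})\ge g(\alpha)$ for every $\alpha\le c/\gamma_{min}$'' is the correct one, not an endpoint comparison --- shows via continuity and the value at $c=0$ (together with the large-$c$ failure) that the defining set is nonempty and bounded, defines $c_{th}$ as its maximum, and concludes $\alpha^*=c/\gamma_{min}$, i.e.\ the smallest $\alpha$ with $p^*(\alpha)=0$, exactly when $c<c_{th}$. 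Two further loose ends in your write-up: on the left piece $[0,c/\gamma_{max}]$ the maximum of $g$ is at $\alpha=0$, not at $c/\gamma_{max}$, so $\alpha=0$ must remain a candidate in any comparison; and the monotonicity-in-$c$ you invoke to define $c_{th}$ is asserted, whereas the paper sidesteps it by taking a maximum in \eqref{eq:c_th_eq}.
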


Theorem \ref{thm:n_user} can be interpreted as follows. If privacy sensitivity is above $\gamma_{max}$ for the given task, it is not worth the effort of the platform to participate. On the other hand, if privacy sensitivity is less than $c_{th}$, the platform should set $\alpha$ to be as small as possible, while still ensuring that all users choose the low privacy setting. Finally, if privacy sensitivities lie somewhere in between, $\alpha^*$ should be chosen based on the $\gamma$ function, and generally will lead to a mixed strategy with some proportion of users choosing each of the two options.

\paragraph{Comparison to other works} Two key novelties of our work is that we (1) consider a constraint of fairness and (2) have users choose a privacy level, rather than report their privacy sensitivity. This is different from \cite{Fallah2022}, and \cite{Cummings2023}, which rely on incentive compatibility, and have users report their privacy parameters. In \cite{Fallah2022}, a computationally efficient algorithm is proposed for computing user payments and privacy levels to assign users. Both of these works consider a mean estimation problem, where users have i.i.d. samples, and so also have the ``equal marginal contribution'' assumption that we have. Distinct from our model, users have an additional term in their utility where they benefit from reduced error in the estimation problem. These works focus on maximizing the platform utility, and it is very clear that the payments deviate significantly from the fair ones that satisfy the fairness axioms.
\cite{Hu2020} is perhaps the work most relevant to ours. They consider an incentive design problem where the platform fixes the total sum of payments $R$ and the amount each user receives is proportional to their privacy level $\rho_i$, which the users choose. This proportional scheme, while potentially viewed as a type of fairness, does not satisfy our axioms. For a particular utility function, they develop a computationally efficient algorithm to compute the equilibrium privacy levels $\rho_i$ based on the privacy sensitivities of the users and the total sum of payments $R$. In all of these works, users have a linear privacy sensitivity function with rate $c_i$. \begin{wrapfigure}{r}{0.45\textwidth}
    \vspace{-6pt}
    \centering
    \includegraphics[width=0.4\columnwidth]{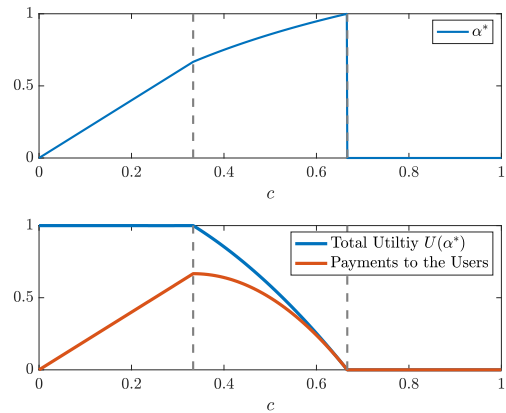}
    \caption{Utility of users and platform when platform solves \eqref{eq:sym_plat_opt}. The solution has three separate regions as predicted by Theorem \ref{thm:n_user}.}
    \label{fig:2_user_utility}
    \vspace{-36pt}
\end{wrapfigure}
Though this seems different from our binary privacy problem, there is a direct correspondence here since we allow mixed strategies, so in expectation, our sensitivity is also reduced to a linear function of the mixed strategy: i.e., $\mathbb{E} \left[c_i \ind{ \rho_i = \rho_2'}\right] = c_i \mathrm{Pr}(\rho_i = \rho_2')$.

\subsection{Mechanism Design: Mean Estimation Example}

Let's look at the utility function from \eqref{eq:matrix_vals_ex_thm1}, and fair payments that we calculated from Theorem~\ref{thm:fairnes2} in \eqref{eq:utility_matrix_eq}. In this case there are $N=2$ users, and we will assume each user has a sensitivity function as in \eqref{eq:user_util2}. This setting satisfies the conditions of Theorem~\ref{thm:n_user}, and so we can use our above result to characterize the optimal $\alpha^*$ to \eqref{eq:sym_plat_opt} for a range of different $c$ values. That is, as the privacy sensitivity parameter $c$ changes, how is the optimal strategy of the platform impacted? Fig.~\ref{fig:2_user_utility} depicts the solution to \eqref{eq:sym_plat_opt}. The top plot shows the optimal $\alpha^*$ vs. $c$, while the bottom plot shows the total utility and the fraction of utility paid to each user at the optimal point $\alpha^*$ for a range of $c\in[0,1]$.

As predicted by Theorem 
\ref{thm:n_user}, we find that the solution is clearly divided into three regions. Equation \ref{eq:c_th_eq} tells us that $c_{th} = \frac{1}{3}$ and $\gamma_{max} = \frac{2}{3}$, matching our observations in Fig.~\ref{fig:2_user_utility}.
In the first region when $c \leq \frac{1}{3}$ the privacy sensitivity of the users is low, and the platform is able to capture most of the utility for itself, paying less of it out to the users. In this region, $\alpha^* = 2c$, growing linearly with the privacy sensitivity. We also see that throughout this regime, the total utility is maximized, as predicted by the theory.
In the region where $c \in [\frac{1}{3},\frac{2}{3}]$, optimal $\alpha^*$ is no longer growing linearly, and now grows as $\alpha^* = \frac{6c}{3c+2}$ no longer have enough incentive to always choose the less private option, the total utility also begins to decrease, meaning less utility is available for incentives. These factors lead to a decrease in total utility in this region. As $\alpha^*$ continues to increase towards 1. Once $\alpha^*=1$ at $c = \frac{2}{3}$, the platform is getting no utility, so may as well choose $\alpha^*=0$. Finally, for $c \geq \frac{2}{3}$, the platform no longer attempts to incentivize the users, and the total utility and payments fall to zero with $\alpha^*=0$. 

For this particular example, it is possible to analytically solve the NE constraint in \eqref{eq:sym_plat_opt}, exact analytic expressions for the curves in Fig.~\ref{fig:2_user_utility} are given in Appendix~\ref{apdx:exact_n_user_ex}.

\subsection{Considering Different Privacy Sensitivities}
The computational burden in solving \eqref{eq:sym_plat_opt} is in characterizing the constraint, since the objective reduces to a one-dimensional optimization over $\alpha \in [0,1]$. In the previous section, with the knowledge that the game is symmetric, we are able to easily characterize the equilibria as a function of $\alpha$. If the $c_i$'s are all different, for arbitrary utility functions, the problem essentially reduces to finding the equilibria in a general game. To make this tractable, we will need some assumptions. In \cite{Hu2020}, the specific choice of utility function and payments makes computation of the equilibrium tractable. If we have only two groups of users with different $c_i$ that act together, and a finite privacy space, we can appeal to tools for enumerating equilibria in matrix games \citep{Avis2010}. In this case if the privacy space is also binary, then the equilibria have an analytical solution, which we provide in Appendix~\ref{apdx:equilibria}. Like the symmetric case, there are 3 cases for each of the two users as well as corresponding thresholds that depend on $c_1$ and $c_2$ respectively, resulting in 9 total cases. For example, in the case where payment is below the threshold of both users, neither participate at the low-privacy level, when the payment is high enough both participate at the low privacy level, and for the remaining intermediate cases, either only one user chooses the low privacy option, or there is some asymmetric mixed strategy.
Below, we numerically investigate this case:

This problem differs from \eqref{eq:sym_plat_opt} because the equilibrium is governed by asymmetric users. For example, if user 1 and user 2 have privacy sensitivity $c_1$ and $c_2$ respectively, we have
\begin{equation}
    u_1(\mathbf{p}_1, \mathbf{p}_2) = \mathbf{p}_1^T \mathbf{\Phi}^{(2)}_1 \mathbf{p}_2 - [0 \;c_1]^T\mathbf{p}_1, \quad
        u_2(\mathbf{p}_1, \mathbf{p}_2) = \mathbf{p}_1^T \mathbf{\Phi}^{(2)}_2 \mathbf{p}_2 - [0 \;c_2]^T\mathbf{p}_2.
\end{equation}

Consider a setting where there are only two users (these can be thought of as representing two \emph{groups} of users) with utility function $u_1$ and $u_2$ listed above.
Thus, when the platform is trying to optimize it's own utility, it must take into consideration that these two groups will play different strategies.
\begin{maxi}<b > 
{\alpha}{\mathbf{p}_1^T\mathbf{U} \mathbf{p}_2 - (1 - \alpha)\mathbf{p}_1^T\mathbf{U}\mathbf{p}_2}{}{}
\addConstraint{ (\mathbf{p}_1, \mathbf{p}_2) \in \text{NE}(\alpha)}{}{}.
\label{eq:resource_allocate_ex2}
\end{maxi}

\begin{figure}
    \centering
    \includegraphics[width=0.44\textwidth]{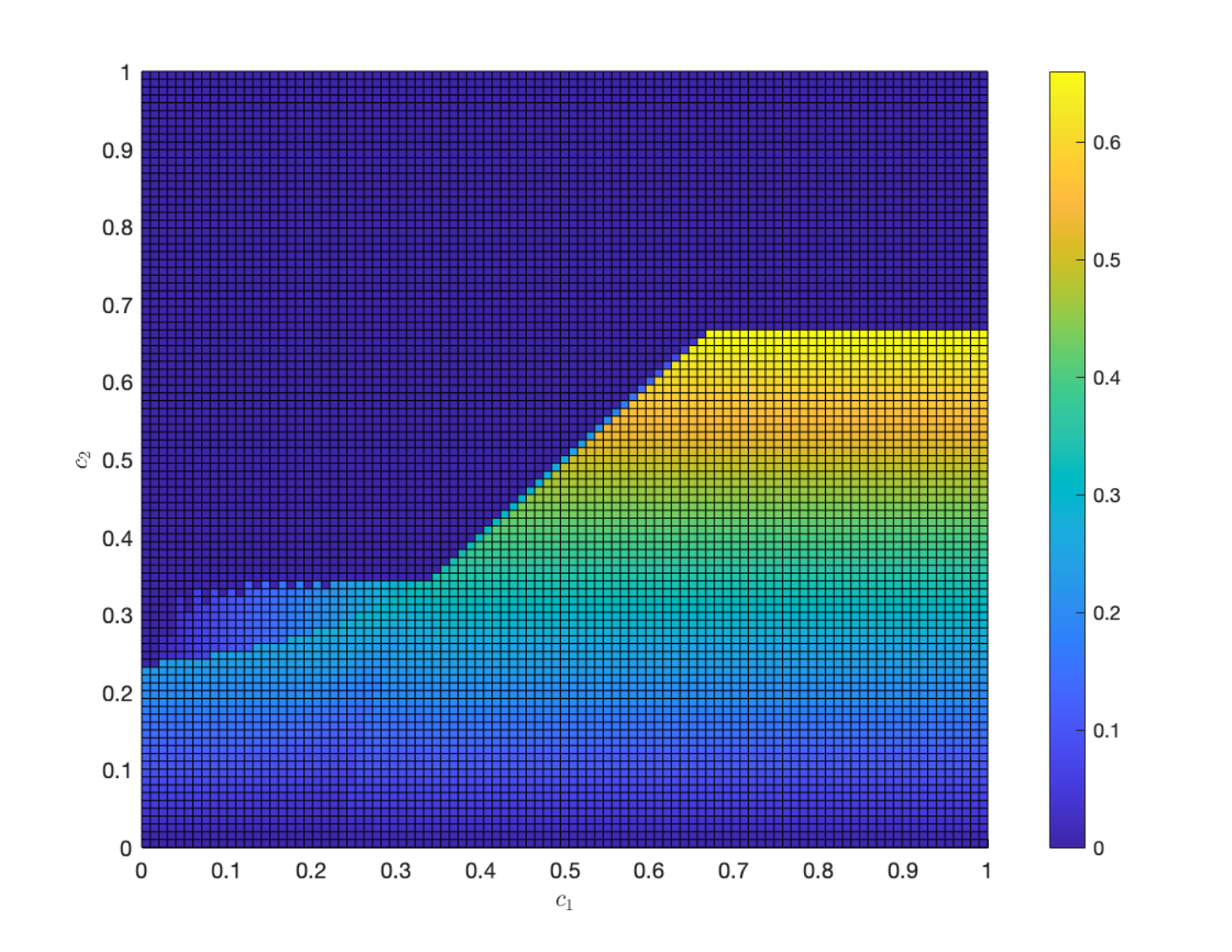}
    \hfill
    \includegraphics[width=0.44\textwidth]{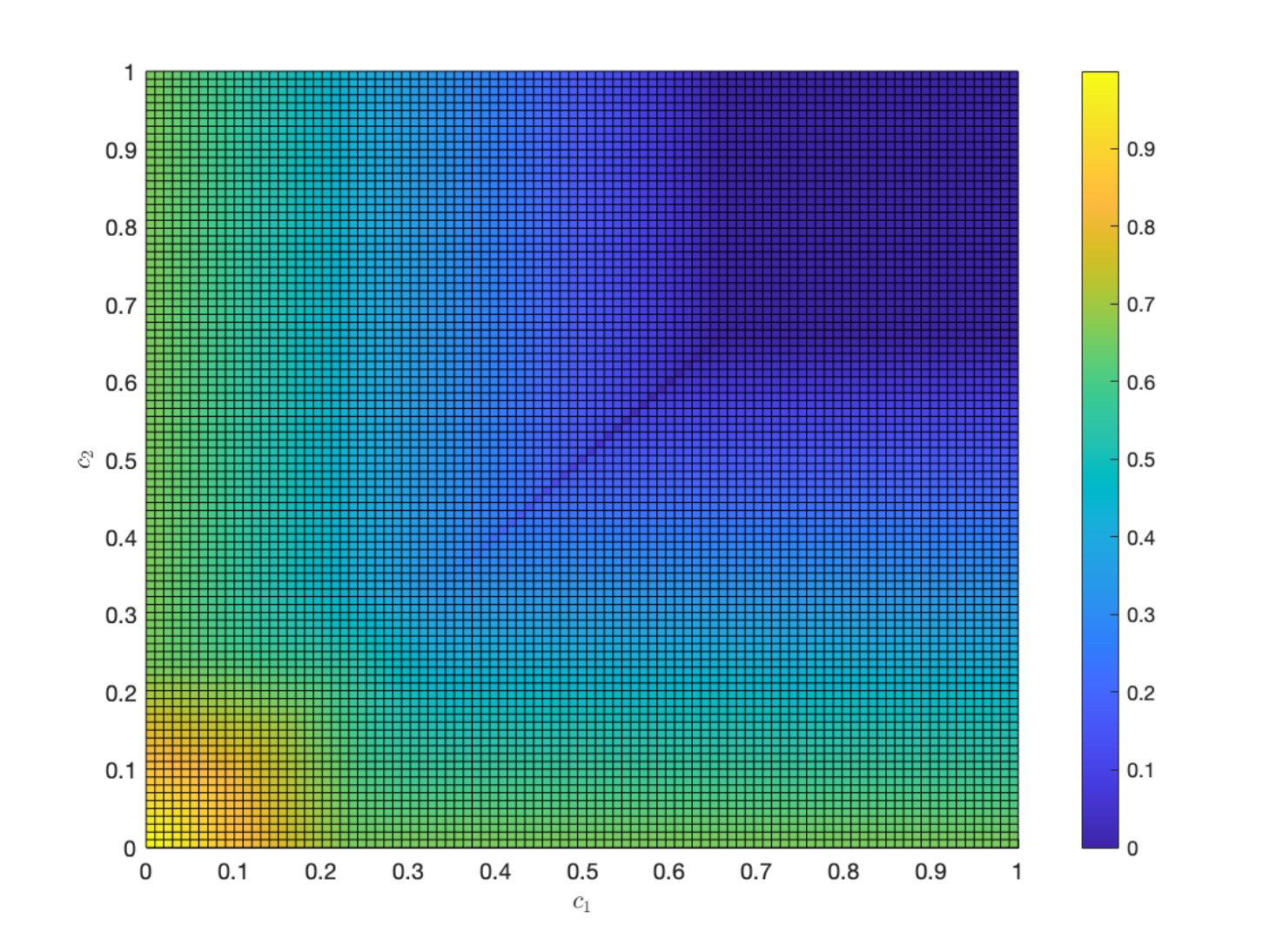}
    \caption{(Left) The payments to user $2$ from the platform for a range of $c_1, c_2$. (Right) The platform's share of utility for the optimal $\alpha^*$ payments for a range of values $c_1$, $c_2$.}
    \label{fig:sim_data}
\end{figure}

Fig.~\ref{fig:sim_data} plots the results of simulating the solution of \ref{eq:resource_allocate_ex2}. It shows that there is one region when $c_1$ and $c_2$ are both small and close together ($< 1/3$), the platform chooses $\alpha$ to collect data from both users. If the difference is large, even in this region, the users may be asymmetrically engaged. When $c_1 > c_2 >1/3$, the platform chooses $\alpha$ such that only user $2$ chooses to participate, even if the difference is very small, and vice versa if $c_2 > c_1 > 1/3$, as before, when $c_1, c_2 > 2/3$ the sensitivity to too high and the platform can no longer offer enough payment to the users.  

\paragraph{Broader Impact Statement}
One of the unique defining characteristics of data is that its generation process is inherently distributed, so no single entity exists to advocate for data sellers. In the past, platforms have been able to extract data from users, often with little to no compensation in return. As public consciousness around privacy changes, a nuanced relationship around privacy between platforms and users must develop. 
Transparency and understanding the value of user data is an important step in empowering regulators, consumers, and platforms.
\begin{itemize}[topsep=0pt]
  \setlength{\itemsep}{3pt}
  \setlength{\parskip}{0pt}
    \item Users making strategic decisions about when they share their data stand to gain from incentives.
    \item For regulators, understanding the amount of value that flows through the interactions between platforms can enable better policies around data. Frameworks like those discussed in Theorem \ref{thm:fairnes1} and \ref{thm:fairnes2} can be a starting point in understanding exactly how much this value is.
    \item For platforms, understanding which data tasks are economically viable, and how they allocate incentive is important. Our discussion in Section \ref{sec:two_user_ex}, and our three regimes help shed light on this.
\end{itemize}

\section{Conclusion}\vspace{-4pt}
This paper introduces two formal definitions of fair payments in the context of acquisition of private data. The first treats the users and the platform together and uses axioms like those of the Shapley value to determine a unique fair distribution of utility. In the second, we define a notion of fairness between the users only, leading to a definition of fairness that admits a range of values, of which the platform is free to choose the most favorable.
By formulating a federated mean estimation problem, we show that heterogeneous users can have significantly different contributions to the overall utility, and that a fair incentive, according to our second notion, must take into account the amount of data, privacy level as well as the degree of heterogeneity. We formulate and solve the fairness-constrained mechanism design problem in this federated mean estimation problem, and also find that data heterogeneity and user properties play an important role in the solution.

While previous literature has investigated how platforms should design incentives for users in order to optimize its utility, the definitions of fairness we propose offers another important way to evaluate the fairness of these mechanisms. This is a critical step towards future research in ensuring that data acquisition mechanisms are \emph{both} fair for users and efficient for platforms.

Though we provide a characterization of optimal fair mechanisms when privacy sensitivity is the same across users, designing mechanisms and developing theories that scale up these solutions to deal with platform that interact with large and diverse groups of users is critical.
Additionally, users may come and go, as their sensitivities may change over time. Understanding how fluctuating users alter the model is of great practical significance.   
Furthermore, there is subjectivity in the choice of axioms, and other choices may lead to meaningful notions of fairness worthy of study.
We have also assumed a non-divisible and transferable utility, but in many cases, users are paid for their data in the form of access to services. Investigating the impact of this will also be important for the practical application of a comprehensive theory for fairness. 
\newpage

\bibliography{main}

\begin{thebibliography}{31}
\providecommand{\natexlab}[1]{#1}
\providecommand{\url}[1]{\texttt{#1}}
\expandafter\ifx\csname urlstyle\endcsname\relax
  \providecommand{\doi}[1]{doi: #1}\else
  \providecommand{\doi}{doi: \begingroup \urlstyle{rm}\Url}\fi

\bibitem[Acemoglu et~al.(2019)Acemoglu, Makhdoumi, Malekian, and
  Ozdaglar]{Acemoglu2019}
Daron Acemoglu, Ali Makhdoumi, Azarakhsh Malekian, and Asuman Ozdaglar.
\newblock Too much data: Prices and inefficiencies in data markets.
\newblock Working Paper 26296, National Bureau of Economic Research, September
  2019.
\newblock URL \url{http://www.nber.org/papers/w26296}.

\bibitem[Acquisti et~al.(2016)Acquisti, Taylor, and Wagman]{Acquisti2016}
Alessandro Acquisti, Curtis Taylor, and Liad Wagman.
\newblock The economics of privacy.
\newblock \emph{Journal of Economic Literature}, 54\penalty0 (2):\penalty0
  442--92, June 2016.
\newblock \doi{10.1257/jel.54.2.442}.
\newblock URL \url{https://www.aeaweb.org/articles?id=10.1257/jel.54.2.442}.

\bibitem[Aldaghri et~al.(2023)Aldaghri, Mahdavifar, and
  Beirami]{aldaghri2023federated}
Nasser Aldaghri, Hessam Mahdavifar, and Ahmad Beirami.
\newblock Federated learning with heterogeneous differential privacy, 2023.

\bibitem[Avis et~al.(2010)Avis, Rosenberg, Savani, and von Stengel]{Avis2010}
David Avis, Gabriel~D. Rosenberg, Rahul Savani, and Bernhard von Stengel.
\newblock {Enumeration of Nash equilibria for two-player games}.
\newblock \emph{Economic Theory}, 42\penalty0 (1):\penalty0 9--37, 2010.
\newblock \doi{10.1007/s00199-009-0449-x}.
\newblock URL \url{https://doi.org/10.1007/s00199-009-0449-x}.

\bibitem[Balazinska et~al.(2011)Balazinska, Howe, and Suciu]{Balazinska2011}
Magdalena Balazinska, Bill Howe, and Dan Suciu.
\newblock Data markets in the cloud: An opportunity for the database community.
\newblock \emph{Proc. VLDB Endow.}, 4\penalty0 (12):\penalty0 1482–1485, aug
  2011.
\newblock ISSN 2150-8097.
\newblock \doi{10.14778/3402755.3402801}.
\newblock URL \url{https://doi.org/10.14778/3402755.3402801}.

\bibitem[Bun \& Steinke(2016)Bun and Steinke]{Steinke2016}
Mark Bun and Thomas Steinke.
\newblock Concentrated differential privacy: Simplifications, extensions, and
  lower bounds.
\newblock In Martin Hirt and Adam Smith (eds.), \emph{Theory of Cryptography},
  pp.\  635--658, Berlin, Heidelberg, 2016. Springer Berlin Heidelberg.
\newblock ISBN 978-3-662-53641-4.

\bibitem[Chaudhuri \& Courtade(2023)Chaudhuri and Courtade]{chaudhuri2023}
Syomantak Chaudhuri and Thomas~A. Courtade.
\newblock Mean estimation under heterogeneous privacy: Some privacy can be
  free, 2023.

\bibitem[Chen \& Zheng(2019)Chen and Zheng]{Chen2019}
Yiling Chen and Shuran Zheng.
\newblock Prior-free data acquisition for accurate statistical estimation.
\newblock In \emph{Proceedings of the 2019 ACM Conference on Economics and
  Computation}, pp.\  659--677, 2019.

\bibitem[Chen et~al.(2018)Chen, Immorlica, Lucier, Syrgkanis, and
  Ziani]{Chen2018}
Yiling Chen, Nicole Immorlica, Brendan Lucier, Vasilis Syrgkanis, and Juba
  Ziani.
\newblock Optimal data acquisition for statistical estimation.
\newblock In \emph{Proceedings of the 2018 ACM Conference on Economics and
  Computation}, EC '18, pp.\  27–44, New York, NY, USA, 2018. Association for
  Computing Machinery.
\newblock ISBN 9781450358293.
\newblock \doi{10.1145/3219166.3219195}.
\newblock URL \url{https://doi.org/10.1145/3219166.3219195}.

\bibitem[Cheng et~al.(2004)Cheng, Reeves, Vorobeychik, and Wellman]{cheng2004}
Shih-Fen Cheng, Daniel~M Reeves, Yevgeniy Vorobeychik, and Michael~P Wellman.
\newblock Notes on equilibria in symmetric games.
\newblock In \emph{Proceedings of the 6th International Workshop On Game
  Theoretic And Decision Theoretic Agents GTDT}, 2004.

\bibitem[Cummings et~al.(2023)Cummings, Elzayn, Pountourakis, Gkatzelis, and
  Ziani]{Cummings2023}
R.~Cummings, H.~Elzayn, E.~Pountourakis, V.~Gkatzelis, and J.~Ziani.
\newblock Optimal data acquisition with privacy-aware agents.
\newblock In \emph{2023 IEEE Conference on Secure and Trustworthy Machine
  Learning (SaTML)}, pp.\  210--224, Los Alamitos, CA, USA, feb 2023. IEEE
  Computer Society.
\newblock \doi{10.1109/SaTML54575.2023.00023}.
\newblock URL
  \url{https://doi.ieeecomputersociety.org/10.1109/SaTML54575.2023.00023}.

\bibitem[Donahue \& Kleinberg(2021)Donahue and Kleinberg]{Donahue2021}
Kate Donahue and Jon Kleinberg.
\newblock Model-sharing games: Analyzing federated learning under voluntary
  participation.
\newblock In \emph{2021 AAAI Conference on Artifical Intellegence}, 2021.
\newblock \doi{10.48550/ARXIV.2010.00753}.
\newblock URL \url{https://arxiv.org/abs/2010.00753}.

\bibitem[Dwork(2008)]{Dwork2008}
Cynthia Dwork.
\newblock Differential privacy: A survey of results.
\newblock In Manindra Agrawal, Dingzhu Du, Zhenhua Duan, and Angsheng Li
  (eds.), \emph{Theory and Applications of Models of Computation}, pp.\  1--19,
  Berlin, Heidelberg, 2008. Springer Berlin Heidelberg.
\newblock ISBN 978-3-540-79228-4.

\bibitem[Fallah et~al.(2022)Fallah, Makhdoumi, Malekian, and
  Ozdaglar]{Fallah2022}
Alireza Fallah, Ali Makhdoumi, Azarakhsh Malekian, and Asuman Ozdaglar.
\newblock Optimal and differentially private data acquisition: Central and
  local mechanisms, 2022.
\newblock URL \url{https://arxiv.org/abs/2201.03968}.

\bibitem[Ghorbani \& Zou(2019)Ghorbani and Zou]{ghorbani19}
Amirata Ghorbani and James Zou.
\newblock Data shapley: Equitable valuation of data for machine learning.
\newblock In Kamalika Chaudhuri and Ruslan Salakhutdinov (eds.),
  \emph{Proceedings of the 36th International Conference on Machine Learning},
  volume~97 of \emph{Proceedings of Machine Learning Research}, pp.\
  2242--2251. PMLR, 09--15 Jun 2019.
\newblock URL \url{https://proceedings.mlr.press/v97/ghorbani19c.html}.

\bibitem[Ghorbani et~al.(2020)Ghorbani, Kim, and Zou]{ghorbani20}
Amirata Ghorbani, Michael Kim, and James Zou.
\newblock A distributional framework for data valuation.
\newblock In Hal~Daumé III and Aarti Singh (eds.), \emph{Proceedings of the
  37th International Conference on Machine Learning}, volume 119 of
  \emph{Proceedings of Machine Learning Research}, pp.\  3535--3544. PMLR,
  13--18 Jul 2020.
\newblock URL \url{https://proceedings.mlr.press/v119/ghorbani20a.html}.

\bibitem[Ghosh \& Ligett(2013)Ghosh and Ligett]{Ghosh2013}
Arpita Ghosh and Katrina Ligett.
\newblock Privacy as a coordination game.
\newblock In \emph{2013 51st Annual Allerton Conference on Communication,
  Control, and Computing (Allerton)}, pp.\  1608--1615, 2013.
\newblock \doi{10.1109/Allerton.2013.6736721}.

\bibitem[Ghosh \& Roth(2011)Ghosh and Roth]{Ghosh2015}
Arpita Ghosh and Aaron Roth.
\newblock Selling privacy at auction.
\newblock In \emph{Proceedings of the 12th ACM Conference on Electronic
  Commerce}, EC '11, pp.\  199–208, New York, NY, USA, 2011. Association for
  Computing Machinery.
\newblock ISBN 9781450302616.
\newblock \doi{10.1145/1993574.1993605}.
\newblock URL \url{https://doi.org/10.1145/1993574.1993605}.

\bibitem[Hart \& Mas-Colell(1989)Hart and Mas-Colell]{Hart1989}
Sergiu Hart and Andreu Mas-Colell.
\newblock Potential, value, and consistency.
\newblock \emph{Econometrica}, 57\penalty0 (3):\penalty0 589--614, 1989.
\newblock ISSN 00129682, 14680262.
\newblock URL \url{http://www.jstor.org/stable/1911054}.

\bibitem[Hu \& Gong(2020)Hu and Gong]{Hu2020}
Rui Hu and Yanmin Gong.
\newblock Trading data for learning: Incentive mechanism for on-device
  federated learning.
\newblock In \emph{GLOBECOM 2020 - 2020 IEEE Global Communications Conference},
  pp.\  1--6, 2020.
\newblock \doi{10.1109/GLOBECOM42002.2020.9322475}.

\bibitem[Jia et~al.(2019)Jia, Dao, Wang, Hubis, Hynes, G\"{u}rel, Li, Zhang,
  Song, and Spanos]{jia19}
Ruoxi Jia, David Dao, Boxin Wang, Frances~Ann Hubis, Nick Hynes, Nezihe~Merve
  G\"{u}rel, Bo~Li, Ce~Zhang, Dawn Song, and Costas~J. Spanos.
\newblock Towards efficient data valuation based on the shapley value.
\newblock In Kamalika Chaudhuri and Masashi Sugiyama (eds.), \emph{Proceedings
  of the Twenty-Second International Conference on Artificial Intelligence and
  Statistics}, volume~89 of \emph{Proceedings of Machine Learning Research},
  pp.\  1167--1176. PMLR, 16--18 Apr 2019.
\newblock URL \url{https://proceedings.mlr.press/v89/jia19a.html}.

\bibitem[Kairouz et~al.(2021)Kairouz, McMahan, Avent, Bellet, Bennis, Bhagoji,
  Bonawitz, Charles, Cormode, Cummings, D’Oliveira, Eichner, Rouayheb, Evans,
  Gardner, Garrett, Gascón, Ghazi, Gibbons, Gruteser, Harchaoui, He, He, Huo,
  Hutchinson, Hsu, Jaggi, Javidi, Joshi, Khodak, Konecný, Korolova,
  Koushanfar, Koyejo, Lepoint, Liu, Mittal, Mohri, Nock, Özgür, Pagh, Qi,
  Ramage, Raskar, Raykova, Song, Song, Stich, Sun, Suresh, Tramèr, Vepakomma,
  Wang, Xiong, Xu, Yang, Yu, Yu, and Zhao]{Kairouz2021}
Peter Kairouz, H.~Brendan McMahan, Brendan Avent, Aurélien Bellet, Mehdi
  Bennis, Arjun~Nitin Bhagoji, Kallista Bonawitz, Zachary Charles, Graham
  Cormode, Rachel Cummings, Rafael G.~L. D’Oliveira, Hubert Eichner, Salim~El
  Rouayheb, David Evans, Josh Gardner, Zachary Garrett, Adrià Gascón, Badih
  Ghazi, Phillip~B. Gibbons, Marco Gruteser, Zaid Harchaoui, Chaoyang He, Lie
  He, Zhouyuan Huo, Ben Hutchinson, Justin Hsu, Martin Jaggi, Tara Javidi,
  Gauri Joshi, Mikhail Khodak, Jakub Konecný, Aleksandra Korolova, Farinaz
  Koushanfar, Sanmi Koyejo, Tancrède Lepoint, Yang Liu, Prateek Mittal,
  Mehryar Mohri, Richard Nock, Ayfer Özgür, Rasmus Pagh, Hang Qi, Daniel
  Ramage, Ramesh Raskar, Mariana Raykova, Dawn Song, Weikang Song, Sebastian~U.
  Stich, Ziteng Sun, Ananda~Theertha Suresh, Florian Tramèr, Praneeth
  Vepakomma, Jianyu Wang, Li~Xiong, Zheng Xu, Qiang Yang, Felix~X. Yu, Han Yu,
  and Sen Zhao.
\newblock Advances and open problems in federated learning.
\newblock \emph{Foundations and Trends® in Machine Learning}, 14\penalty0
  (1–2):\penalty0 1--210, 2021.
\newblock ISSN 1935-8237.
\newblock \doi{10.1561/2200000083}.
\newblock URL \url{http://dx.doi.org/10.1561/2200000083}.

\bibitem[Kelly(1956)]{kelly1956new}
John~L Kelly.
\newblock A new interpretation of information rate.
\newblock \emph{the bell system technical journal}, 35\penalty0 (4):\penalty0
  917--926, 1956.

\bibitem[Li et~al.(2021)Li, Hu, Beirami, and Smith]{li2021ditto}
Tian Li, Shengyuan Hu, Ahmad Beirami, and Virginia Smith.
\newblock Ditto: Fair and robust federated learning through personalization.
\newblock In \emph{International Conference on Machine Learning}, pp.\
  6357--6368. PMLR, 2021.

\bibitem[Lundberg \& Lee(2017)Lundberg and Lee]{Lundberg2017}
Scott~M. Lundberg and Su-In Lee.
\newblock A unified approach to interpreting model predictions.
\newblock In \emph{Proceedings of the 31st International Conference on Neural
  Information Processing Systems}, NIPS'17, pp.\  4768–4777, Red Hook, NY,
  USA, 2017. Curran Associates Inc.
\newblock ISBN 9781510860964.

\bibitem[Roth \& Schoenebeck(2012)Roth and Schoenebeck]{Roth2012}
Aaron Roth and Grant Schoenebeck.
\newblock Conducting truthful surveys, cheaply.
\newblock In \emph{Proceedings of the 13th ACM Conference on Electronic
  Commerce}, EC '12, pp.\  826–843, New York, NY, USA, 2012. Association for
  Computing Machinery.
\newblock ISBN 9781450314152.
\newblock \doi{10.1145/2229012.2229076}.
\newblock URL \url{https://doi.org/10.1145/2229012.2229076}.

\bibitem[Shapley(1952)]{shapley1952}
Lloyd~S. Shapley.
\newblock \emph{A Value for N-Person Games}.
\newblock RAND Corporation, Santa Monica, CA, 1952.
\newblock \doi{10.7249/P0295}.

\bibitem[Spiekermann et~al.(2015{\natexlab{a}})Spiekermann, Acquisti,
  B{\"o}hme, and Hui]{Spiekermann2015b}
Sarah Spiekermann, Alessandro Acquisti, Rainer B{\"o}hme, and Kai-Lung Hui.
\newblock The challenges of personal data markets and privacy.
\newblock \emph{Electronic Markets}, 25\penalty0 (2):\penalty0 161--167,
  2015{\natexlab{a}}.
\newblock \doi{10.1007/s12525-015-0191-0}.
\newblock URL \url{https://doi.org/10.1007/s12525-015-0191-0}.

\bibitem[Spiekermann et~al.(2015{\natexlab{b}})Spiekermann, B{\"o}hme,
  Acquisti, and Hui]{Spiekermann2015}
Sarah Spiekermann, Rainer B{\"o}hme, Alessandro Acquisti, and Kai-Lung Hui.
\newblock Personal data markets.
\newblock \emph{Electronic Markets}, 25\penalty0 (2):\penalty0 91--93,
  2015{\natexlab{b}}.
\newblock \doi{10.1007/s12525-015-0190-1}.
\newblock URL \url{https://doi.org/10.1007/s12525-015-0190-1}.

\bibitem[Wang \& Jia(2023)Wang and Jia]{Wang2023}
Jiachen~T. Wang and Ruoxi Jia.
\newblock Data banzhaf: A robust data valuation framework for machine learning.
\newblock In Francisco Ruiz, Jennifer Dy, and Jan-Willem van~de Meent (eds.),
  \emph{Proceedings of The 26th International Conference on Artificial
  Intelligence and Statistics}, volume 206 of \emph{Proceedings of Machine
  Learning Research}, pp.\  6388--6421. PMLR, 25--27 Apr 2023.
\newblock URL \url{https://proceedings.mlr.press/v206/wang23e.html}.

\bibitem[Wieringa et~al.(2021)Wieringa, Kannan, Ma, Reutterer, Risselada, and
  Skiera]{Wieringa2021}
Jaap Wieringa, P.K. Kannan, Xiao Ma, Thomas Reutterer, Hans Risselada, and
  Bernd Skiera.
\newblock Data analytics in a privacy-concerned world.
\newblock \emph{Journal of Business Research}, 122:\penalty0 915--925, 2021.
\newblock ISSN 0148-2963.
\newblock \doi{https://doi.org/10.1016/j.jbusres.2019.05.005}.
\newblock URL
  \url{https://www.sciencedirect.com/science/article/pii/S0148296319303078}.

\end{thebibliography}
\bibliographystyle{tmlr}
\newpage
\appendix
\section{Notation Table}\label{apdx:notes}
\begin{table}[h]
    \centering
    \begin{tabular}{lll}
        \toprule
         Symbol & Definition & Relevance\\
         \midrule
        $\mathcal{E}$ & privacy level space & \multirow{12}{*}{All}\\
        $N$ & Number of users \\
        $\rho_i$ &  privacy level (generic) of user $i$\\
        $\eps_i$ & privacy level (DP only) of user $i$\\
        $U(\cdot)$ & platform utility function \\
        $u_i(\cdot)$ & user $i$'s utility function \\
        $c_i(\cdot)$ & privacy sensitivity function \\
        $\text{NE}(\cdot)$ & Nash Equilibrium \\
        $t_i(\cdot)$ & generic payment function \\
        $\phi_i(\cdot)$ & fair payment function for user $i$ under Theorem~\ref{thm:fairnes1} \& \ref{thm:fairnes2}\\
        $\phi_p(\cdot)$ & fair payment for platform under Theorem~\ref{thm:fairnes2} \\
        $\alpha(\cdot)$ & fraction of utility used as payment in Theorem~\ref{thm:fairnes2} \\
        \midrule
        $n_i$ & user $i$'s amount of data  & \multirow{5}{*}{Section 4}\\
        $s^2$ & variance of user means\\
        $r^2$ & mean of user variances \\
        $a_i$ & relative user value in utility function \\
        $\text{BR}_i(\cdot)$ & best response function \\
        \midrule
        $p$ & prob. of playing more private option in mixed strategy & \multirow{2}{*}{Section 5}\\
        $\gamma(\cdot)$ & expected relative payoff by changing strategy \\
        \bottomrule
    \end{tabular}
    \caption{Table of Common Notations}
\end{table}
\section{Missing Proofs}
\subsection{Proof of Equation \ref{eq:utility_eps}}\label{app:example_proof}
In this section, we present the calculations required to arrive at the utility values in \eqref{eq:utility_eps}.
First let's treat the trivial case of $\eps_1=0$, $\eps_2 = 0$. The optimal $\beps$-DP estimator is simply the optimal Bayes estimator with no data, i.e., the prior mean. Let us define this estimator as $\hat{\mu}_{(0,0)} = 0$. Its risk function is
\begin{equation}
    R(\mu, \hat{\mu}_{(0,0)}) = \mathbb{E}\left[ L(\hat{\mu}_{(0,0)}, \mu) \mid \mu\right] = \mu^2.
\end{equation}
The Bayes risk of $\hat{\mu}_{(0,0)}$ is the expectation of this quantity taken using our prior:
\begin{equation}
    r([0, 0]) = \mathbb{E}\left[ \mathbb{\mu}^2\right] = \frac{1}{12}.
\end{equation}
Next, consider the case where user $i$ chooses privacy level $\eps_1 = \eps'>0$, and the other user chooses $\eps_2=0$. In this case the estimator depends on $X_1$, $\hat{\mu}_{(\eps',0)} = w_1 X_1 + Z$. Then the risk function is:
\begin{equation}
    R(\mu, \hat{\mu}_{(\eps',0)}) = \mathbb{E}\left[ \left(w_1 X_1 + Z - \mu\right)^2 \mid \mu\right] = \left(\mu + \frac{1}{2}\right)\left(\mu - \frac{w_1}{2}\right)^2 + \left(-\mu + \frac{1}{2}\right)\left(\mu + \frac{w_1}{2}\right)^2 + \frac{2}{\eta^2}.
\end{equation}
Now taking the expectation with respect to our prior over $\mu$, we have:
\begin{equation}
    \mathbb{E}\left[ R(\mu, \hat{\mu}_{(\eps',0)}) \right] = \frac{1}{12}\left( 3w_1^2 - 2w_1 + 1\right) + \frac{2}{\eta^2}, \label{eq:bayes_risk_1eps}
\end{equation}
here $\eta$ is the inverse scale parameter for $Z$. Note that \eqref{eq:bayes_risk_1eps} is minimized when $\eta$ is maximized. The $\beps$-DP condition enforces the constraint $\eta \leq \frac{\eps'}{w_1}$. This constraint will be met with equality for the optimal $w_1$.
The optimal $w_1^* = \frac{1}{3 + \frac{24}{{\eps'}^2}}$. Thus, we have:
\begin{equation}
    \hat{\mu}_{(\eps',0)} = \frac{1}{3 + \frac{24}{{\eps'}^2}} X_1 + Z, \quad Z \sim \text{Laplace}\left(\frac{\eps'}{3{\eps'}^2 + 24}\right),
\end{equation}
and the resulting Bayes risk is:
\begin{equation}
    r([\eps', 0]) = r([0, \eps']) = \frac{1}{12} \left( 1 - \frac{1}{3 + \frac{24}{{\eps'}^2}}\right).
\end{equation}
For the case with $\eps_1 = \eps_2 = \eps'$ we can repeat the same process by defining $\hat{\mu}_{(\eps',\eps')} = w_1 X_1 + w_2 X_2 + Z$. By symmetry, we must have $w_1 = w_2$, so we drop the index. Then the risk function and its expectation are:
\begin{equation}
    R(\mu, \hat{\mu}_{(\eps',\eps')}) = 2\left( \mu + \frac{1}{2}\right)\left(-\mu + \frac{1}{2} \right)\mu^2 +
    \left( \mu + \frac{1}{2}\right)^2 \left( w - \mu\right)^2 + \left( -\mu + \frac{1}{2}\right)^2\left( \mu + w\right)^2 + \frac{2}{\eta}^2
\end{equation}
\begin{equation}
    \mathbb{E}\left[ R(\mu, \hat{\mu}_{(\eps',\eps')}) \right] = \frac{1}{12}(8w^2 -4w + 1) + \frac{2}{\eta^2}.
\end{equation}
By a similar argument to the previous case, the Bayes optimal estimator and the corresponding Bayes risk is:
\begin{equation}
    \hat{\mu}_{(\eps',\eps')} = \frac{1}{4 + \frac{12}{{\eps'}^2}} (X_1 + X_2) + Z, \quad Z \sim \text{Laplace}\left(\frac{\eps'}{4{\eps'}^2 + 12}\right),
\end{equation}
\begin{equation}
    r([\eps', \eps']) = \frac{1}{12} \left( 1 - \frac{1}{2 + \frac{6}{{\eps'}^2}}\right).
\end{equation}
Finally letting $U(\beps) = c_1 r(\beps) + c_2$. Take $U(\mathbf{0}) = 0 \implies c_1 = -12c_2$. And $\max_{\beps} U(\beps) = 1 \implies c_1 = 24(1 - c_2)$. Simplifying gives us our desired result: 
\begin{equation}
    \mathbf{U} = 
    \begin{bmatrix}
    U([0, 0]^T) & U([0, \eps']^T) \\
    U([\eps', 0]^T) & U([\eps', \eps']^T) \label{eq:utility_eps}
    \end{bmatrix}
    =
    \begin{bmatrix}
    0 & 2\left(3 + \frac{24}{(\eps')^2} \right)^{-1} \\
    2\left(3 + \frac{24}{(\eps')^2} \right)^{-1} & \left(1 + \frac{3}{(\eps')^2} \right)^{-1}
    \end{bmatrix}
\end{equation}

\hfill\qed
\subsection{Proof of Theorem~\ref{thm:fairnes1} and Theorem~\ref{thm:fairnes2}}\label{app:shap}

We will begin with the proof of Theorem~\ref{thm:fairnes2}, which is standard and follows the typical proof of the Shapley value. 
We begin by proving $\phi_i(\brho)$ as defined in \eqref{eq:shap_value_standard} satisfies axioms (B.i-iii).
First assume $U(\brho_{S\cup\{i\}}) = U(\brho_{S\cup\{j\}}) \;\;\forall S \subset [N] \backslash \{i,j\}$, then:
\begin{eqnarray}
        \phi_i(\brho) &=& \frac{\alpha(\brho)}{N}\sum_{S \subseteq [N]\backslash\{i\}} \frac{U(\brho_{S\cup\{i\}}) - U(\brho_{S})}{\binom{N-1}{\abs{S}}}\\
        &=& \frac{\alpha(\brho)}{N}\left(\sum_{S \subseteq [N]\backslash\{i,j\}} \frac{U(\brho_{S\cup\{i\}}) - U(\brho_{S})}{\binom{N-1}{\abs{S}}} + \sum_{S \subseteq [N]\backslash\{i,j\}} \frac{\left(U(\brho_{S\cup\{j\}\cup\{i\}}) - U(\brho_{S\cup\{j\}})\right)}{\binom{N-1}{\abs{S}+1}}\right)\\
        &=& \frac{\alpha(\brho)}{N}\left(\sum_{S \subseteq [N]\backslash\{i,j\}} \frac{U(\brho_{S\cup\{j\}}) - U(\brho_{S})}{\binom{N-1}{\abs{S}}} + \sum_{S \subseteq [N]\backslash\{i,j\}} \frac{\left(U(\brho_{S\cup\{i\}\cup\{j\}}) - U(\brho_{S\cup\{i\}})\right)}{\binom{N-1}{\abs{S}+1}}\right)\\
        &=&\phi_j(\brho),
\end{eqnarray}
proving axiom (B.i) is satisfied. For the proof that axiom (B.ii) is satisfied, we write:
\begin{eqnarray}
        \sum_{i} \phi_i(\brho) &=& \frac{\alpha(\brho)}{N}\sum_{i}\sum_{S \subseteq [N]\backslash\{i\}} \frac{U(\brho_{S\cup\{i\}}) - U(\brho_{S})}{\binom{N-1}{\abs{S}}}\\
        & = & \frac{\alpha(\brho)}{N} \left(\sum_{i}\sum_{S \subseteq [N]\backslash\{i\}} \frac{U(\brho_{S\cup\{i\}})}{\binom{N-1}{\abs{S}}}  - \sum_{i}\sum_{S \subseteq [N]\backslash\{i\}} \frac{U(\brho_{S})}{\binom{N-1}{\abs{S}}} \right) \\
       & = & \alpha(\brho)U(\brho) + \frac{\alpha(\brho)}{N} \left(\sum_{i}\sum_{\substack{S \subseteq [N]\backslash\{i\} \\ \abs{S} < N-1}} \frac{U(\brho_{S\cup\{i\}})}{\binom{N-1}{\abs{S}}}  - \sum_{i}\sum_{S \subseteq [N]\backslash\{i\}} \frac{U(\brho_{S})}{\binom{N-1}{\abs{S}}} \right) \\
        & = & \alpha(\brho)U(\brho) + \frac{\alpha(\brho)}{N} \left(\sum_{i}\sum_{\substack{S \subseteq [N]\\ i \in S\\\abs{S} < N-1}} \frac{U(\brho_{S})}{\binom{N-1}{\abs{S}-1}}  - \sum_{\substack{S \subseteq [N] \\ \abs{S} \leq N-1}} \frac{(N - \abs{S})U(\brho_{S})}{\binom{N-1}{\abs{S}}} \right) \\
        & = & \alpha(\brho)U(\brho) + \frac{\alpha(\brho)}{N} \left(\sum_{\substack{S \subseteq [N]\\ \abs{S} \leq N-1}} \frac{\abs{S}U(\brho_{S})}{\binom{N-1}{\abs{S}-1}}  - \sum_{\substack{S \subseteq [N] \\ \abs{S} \leq N-1}} \frac{(N - \abs{S})U(\brho_{S})}{\binom{N-1}{\abs{S}}} \right) \\
        & =& \alpha(\brho)U(\brho),
\end{eqnarray}
thus proving axiom (B.ii) is satisfied. Finally, we note that (B.iii) is satisfied by linearity. Next, we establish the uniqueness of \eqref{eq:shap_value_standard}. To prove uniqueness, we take an approach that is standard in the literature where we define the unanimity game, show the uniqueness of the $\phi_i(\brho)$ in that case, and then argue that uniqueness follows from additivity (B.iii). 

Define the unanimity utility, indexed by some $T \subseteq [N]$:
\begin{equation}
    U_T(\brho) =     
    \begin{cases}
        1 & \text{if } T \subseteq \text{supp}(\brho)\\
        0 & \text{if } else.
    \end{cases}    
\end{equation}
$\{U_T\}_{T \subseteq [N]}$ form a linear basis for utility function such that any utility $U$ can be represented uniquely by a set of values $\{b_T\}_{T \subseteq [N]}$. In addition, by direct application of the axioms, it is easy to see that for the unanimity utility, the fair allocation $\phi^{(T)}_i(\brho)$ is unique and is of the form:
\begin{equation}
    \phi^{(T)}_i(\brho) =     \begin{cases}
        \frac{\alpha(\brho)}{T} & \text{if } i \in T\\
        0 & \text{if } else.
    \end{cases}    
\end{equation}
Thus, for any utility $U$, the fair value is represented uniquely by $\sum_{T \subseteq [N]}b_T \phi^{(T)}_i(\brho)$, since this value is unique, it must be equivalent to \eqref{eq:shap_value_standard}.

Now we consider the proof of Theorem~\ref{thm:fairnes1}. 
By a similar argument to the above, we can establish that:
\begin{eqnarray}
    \phi_p(z, \brho) & = & \frac{1}{N+1}\sum_{S \subseteq [N]} \frac{U(z, \brho_{S}) - U(0, \brho_{S})}{\binom{N}{\abs{S}}} 
\end{eqnarray}
as well as:
\begin{eqnarray}
        \phi_i(z, \brho) & = & \frac{1}{N+1}\sum_{\substack{S \subseteq [N]\backslash\{i\} \\ z' \in \{0,z\}}} \frac{1}{\binom{N}{\abs{S} + \mathds{1} \left( z' = 1\right)}}\left(U(z', \brho_{S\cup\{i\}}) - U(z', \brho_{S})\right) \\
\end{eqnarray}
Applying the definition $U(0,\brho) = 0$ we have 
\begin{eqnarray}
    \phi_p(z, \brho) & = & \frac{1}{N+1}\sum_{S \subseteq [N]} \frac{U(z, \brho_{S})}{\binom{N}{\abs{S}}} 
\end{eqnarray}
\begin{eqnarray}
        \phi_i(z, \brho) & = & \frac{1}{N+1}\sum_{S \subseteq [N]\backslash\{i\}} \frac{1}{\binom{N}{\abs{S} + 1}}\left(U(z, \brho_{S\cup\{i\}}) - U(z, \brho_{S})\right),
\end{eqnarray}
completing the proof.
\subsection{Error Computation for Section \ref{sec:fed_learn}} \label{apdx:federror}
In this section we prove Proposition~\ref{lem:err_results} and \ref{lem:minimization} from which exact error expressions follow. 
\begin{proposition}\label{lem:err_results}
For the federated mean estimation problem described in Section~\ref{sec:fed_learn}, the expected mean-squared error is given by:
\begin{multline} \label{eq:error_eq}
    \mathbb{E}  \left[ \left( \hat{\theta}_i^p - \theta_i\right)^2\right] =\\ r^2\left(\sum_{j=1}^{N_2} w_{ij}^2 \cdot \frac{1}{n_j} + \frac{1}{N_1}w_{i0}^2\frac{1}{\bar{n}}\right)
    + s^2 \left( \sum_{\substack{j=1 \\ j \neq i}}^{N_2}w_{ij}^2 + \frac{1}{N_1^2}\sum_{\substack{j=N_2 + 1 \\ j \neq i}}^{N_2 + N_1}w_{i0}^2 + \left( \sum_{\substack{j=1 \\ j \neq i}}^{N_2}w_{ij} + \frac{1}{N_1}\sum_{\substack{j=N_2 + 1 \\ j \neq i}}^{N_2 + N_1}w_{i0} \right)^2\right),
\end{multline}
where $\bar{n} = \left( \frac{1}{N_1}\sum \limits_{j=N_2 + 1}^{N_1 + N_2} \frac{1}
{n_j}\right)^{-1}$.
\end{proposition}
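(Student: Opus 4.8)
The plan is to collapse $\hat\theta_i^p$ into a single weighted average of the local sample means, decompose the resulting error into a ``within-user sampling'' part and a ``between-user heterogeneity'' part that are uncorrelated, and evaluate the two quadratic expectations separately.

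First I would fix notation. Let $\hat\theta_j=\frac1{n_j}\sum_{k=1}^{n_j}X_{jk}$ be user $j$'s local sample mean and write $\hat\theta_j=\theta_j+\bar\epsilon_j$ with $\bar\epsilon_j\defeq\hat\theta_j-\theta_j$. Conditioning on $(\theta_k,\sigma_k^2)_k$, the $\bar\epsilon_j$ are independent across $j$ with mean $0$ and $\mathbb{E}[\bar\epsilon_j^2\mid\sigma_j^2]=\sigma_j^2/n_j$, so the tower rule gives $\mathbb{E}[\bar\epsilon_j]=0$, $\mathbb{E}[\bar\epsilon_j^2]=r^2/n_j$, and $\bar\epsilon_j$ is uncorrelated with every $\theta_k$. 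Writing $\eta_k\defeq\theta_k-\mathbb{E}[\theta_k]$, the $\theta_k$ being i.i.d.\ gives $\mathbb{E}[\eta_k\eta_\ell]=s^2\ind{k=\ell}$. With the users ordered so that $\hat\theta^f=\frac1{N_1}\sum_{k=N_2+1}^{N_2+N_1}\hat\theta_k$, we may rewrite $\hat\theta_i^p=\sum_j v_{ij}\hat\theta_j$, where $v_{ij}=w_{ij}$ for $1\le j\le N_2$, $v_{ij}=w_{i0}/N_1$ for $N_2<j\le N_2+N_1$, and $v_{ij}=0$ otherwise; the normalization $\sum_j w_{ij}=1$ becomes $\sum_j v_{ij}=1$.

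Using $\sum_j v_{ij}=1$, I would split $\hat\theta_i^p-\theta_i=\sum_j v_{ij}(\hat\theta_j-\theta_i)=E_1+E_2$ with $E_1\defeq\sum_j v_{ij}\bar\epsilon_j$ and $E_2\defeq\sum_j v_{ij}(\eta_j-\eta_i)$. Expanding the square, the cross term $\mathbb{E}[E_1E_2]$ vanishes because each $\bar\epsilon_j$ is (conditionally) mean zero and uncorrelated with the $\eta$'s. For $E_1$, conditional independence of the $\bar\epsilon_j$ yields $\mathbb{E}[E_1^2]=\sum_j v_{ij}^2\mathbb{E}[\bar\epsilon_j^2]=r^2\big(\sum_{j=1}^{N_2}w_{ij}^2/n_j+\frac{w_{i0}^2}{N_1^2}\sum_{k=N_2+1}^{N_2+N_1}1/n_k\big)$, and the identity $\bar{n}^{-1}=\frac1{N_1}\sum_{k=N_2+1}^{N_2+N_1}1/n_k$ rewrites the second term as $\frac1{N_1}w_{i0}^2/\bar{n}$, which is the first line of \eqref{eq:error_eq}. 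For $E_2=\sum_{j\ne i}v_{ij}\eta_j+(v_{ii}-1)\eta_i$, the i.i.d.\ property of the $\eta$'s gives $\mathbb{E}[E_2^2]=s^2\big(\sum_{j\ne i}v_{ij}^2+(v_{ii}-1)^2\big)$; substituting $v_{ii}-1=-\sum_{j\ne i}v_{ij}$ converts $(v_{ii}-1)^2$ into $\big(\sum_{j\ne i}v_{ij}\big)^2$, and splitting the remaining $j\ne i$ sums into the direct block (weights $w_{ij}$, $j\le N_2$) and the federated block (weights $w_{i0}/N_1$, $N_2<j\le N_2+N_1$) produces exactly the $s^2(\cdots)$ term of \eqref{eq:error_eq}. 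Adding $\mathbb{E}[E_1^2]$ and $\mathbb{E}[E_2^2]$ finishes the proof.

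The step needing the most care is the self-term in $E_2$: whether user $i$ feeds the estimator directly ($\rho_i=2$, so $i\le N_2$ and $v_{ii}=w_{ii}$) or through the federated average ($\rho_i=1$, so $N_2<i\le N_2+N_1$ and $v_{ii}=w_{i0}/N_1$), the coefficient multiplying $\eta_i$ changes, yet the target formula \eqref{eq:error_eq} is written uniformly, with both cases encoded only by the ``$j\ne i$'' exclusions. The identity $v_{ii}-1=-\sum_{j\ne i}v_{ij}$ is exactly what collapses the two cases onto a common expression, so I would carry out that substitution explicitly rather than case-splitting on $\rho_i$.
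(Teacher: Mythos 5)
Your proof is correct, and it follows the same overall reduction as the paper---rewrite $\hat\theta_i^p$ as a single weighted combination $\sum_j v_{ij}\hat\theta_j$ with $v_{ij}=w_{ij}$ on the direct block, $v_{ij}=w_{i0}/N_1$ on the federated block, and $v_{ij}=0$ otherwise, then plug into a general error formula for such estimators. The difference is in how that general formula is obtained: the paper simply cites Theorem 4.2 of \cite{Donahue2021}, which states
\begin{equation*}
\mathbb{E}\left[\left(\hat\theta_i^p-\theta_i\right)^2\right]=r^2\sum_{j}v_{ij}^2\cdot\frac{1}{n_j}+s^2\left(\sum_{j\neq i}v_{ij}^2+\Bigl(\sum_{j\neq i}v_{ij}\Bigr)^2\right),
\end{equation*}
whereas you re-derive it from scratch via the decomposition into the within-user sampling term $E_1=\sum_j v_{ij}\bar\epsilon_j$ and the heterogeneity term $E_2=\sum_{j\neq i}v_{ij}\eta_j+(v_{ii}-1)\eta_i$, with the cross term vanishing by conditioning and $(v_{ii}-1)^2=\bigl(\sum_{j\neq i}v_{ij}\bigr)^2$ from the normalization. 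That derivation is exactly the content of the cited result, so your version buys self-containedness (and makes transparent why the self-term is absorbed uniformly whether $i$ is in the direct block, the federated block, or even contributes nothing, since $v_{ii}-1=-\sum_{j\neq i}v_{ij}$ in every case), at the cost of a somewhat longer argument; the paper's version is shorter but leans on the external theorem. The bookkeeping you do after the substitution, including the identity $\bar n^{-1}=\frac{1}{N_1}\sum_{j=N_2+1}^{N_2+N_1}1/n_j$, matches the paper's.
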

\begin{proof}
Consider an estimator of the form $\hat{\theta}^p_i = \sum \limits_{i=1}^{N}v_{ij}\hat{\theta}_{j}$, where user $j$ has $n$ samples, and $\theta_j$ is the local model of user $j$. By Theorem 4.2 of \cite{Donahue2021}, the error can be written as:
\begin{equation}\label{eq:err_klien}
    \mathbb{E}  \left[ \left( \hat{\theta}_i^p - \theta_i\right)^2\right] = r^2 \sum_{j=1}^{N}v_{ij}^2\cdot \frac{1}{n_j} + s^2\left(\sum_{j \neq i} v_{ij}^2 + \left( \sum_{j \neq i} v_{ij}\right)^2 \right)
\end{equation}
For $j=1, \dotsc, N_2$, we have $v_{ij} = w_{ij}$. For $j=N_2 + 1, \dotsc, N_2 + N_1$, we have $v_{ij} = \frac{w_{i0}}{N_1}$. Finally, for $j > N_1 + N_2$, we have $v_{ij} = 0$.
Thus the first term can be written as:
\begin{eqnarray}
    r^2 \sum_{j=1}^{N}v_{ij}^2\cdot \frac{1}{n_j} & = & r^2\left(\sum_{j=1}^{N_2} w_{ij}^2\frac{1}{n_j}+ \sum_{j=N_2 + 1}^{N_2 + N_1} \frac{1}{n_j}\left(\frac{w_{i0}}{N_1}\right)^2\right)\\
    &=& r^2\left(\sum_{j=1}^{N_2} w_{ij}^2\frac{1}{n_j}  + \frac{1}{N_1}w_{i0}^2\frac{1}{\bar{n}} \right).
\end{eqnarray}
Making these same substitutions to $\sum_{j \neq i} v^2_{ij}$ and $\sum_{j \neq i} v_{ij}$ yields the desired result.
\end{proof}
\begin{proposition}\label{lem:minimization}
The error expression \eqref{eq:error_eq} is minimized if $\rho_i = 0$ with weights:
\begin{equation}
    w_{i0} = \frac{N_1}{N_1 + N_2\frac{V_0}{\bar{V}}},\;\; w_{ij} = \frac{V_0/V_j}{N_1 + N_2\frac{V_0}{\bar{V}}}.
\end{equation}
If $\rho_i = 1$ \eqref{eq:error_eq} is minimized by:
\begin{equation}
    w_{i0}  = \frac{N_1}{N_1 + N_2\frac{V_0}{\bar{V}}} + \frac{N_2}{N_1 + N_2\frac{V_0}{\bar{V}}} \frac{s^2}{\bar{V}},
\end{equation}
\begin{equation}
    w_{ij} = \frac{V_0/V_j}{N_1 + N_2\frac{V_0}{\bar{V}}} - \frac{1}{N_1 + N_2\frac{V_0}{\bar{V}}} \frac{s^2}{V_j}.
\end{equation}
Finally, if $\rho_i = 2$, \eqref{eq:error_eq} is minimized by:
\begin{equation}
    w_{i0} = \frac{N_1}{N_1 + N_2 \frac{V_0}{\bar{V}}} - \frac{N_1}{N_1 + N_2\frac{V_0}{\bar{V}}} \frac{s^2}{V_i},
\end{equation}
\begin{equation}
    w_{ij} = \frac{V_0/V_j}{N_1 + N_2\frac{V_0}{\bar{V}}} - \frac{V_0/V_j}{N_1 + N_2\frac{V_0}{\bar{V}}} \frac{s^2}{V_i}
\end{equation}
\begin{equation}
    w_{ii} = \frac{V_0/V_i}{N_1 + N_2\frac{V_0}{\bar{V}}} + \frac{N_1 + N_2\frac{V_0}{\bar{V}} - \frac{V_0}{V_i}}{N_1 + N_2\frac{V_0}{\bar{V}}} \frac{s^2}{V_i}
\end{equation}
\end{proposition}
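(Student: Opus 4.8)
The plan is to treat the minimization of \eqref{eq:error_eq} subject to the normalization $w_{i0}+\sum_{j=1}^{N_2}w_{ij}=1$ from \eqref{eq:p_estimator} as a constrained quadratic program and to solve it with a single Lagrange multiplier. In each of the three cases the objective is a strictly convex quadratic in the weight vector — a sum of squares with strictly positive coefficients plus the positive-semidefinite rank-one term $s^2\big(\sum_{j\neq i}v_{ij}\big)^2$ — so the unique stationary point of the Lagrangian is the global minimizer and no second-order check is needed.

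The first step is a reduction that puts all three cases in a common form. Substituting $v_{ij}=w_{ij}$ for $j\le N_2$ and $v_{ij}=w_{i0}/N_1$ for the $N_1$ federating indices, and then using the constraint, one checks that the coefficient of $w_{i0}^2$ always collapses to $V_0/N_1$ (this is the identity $\tfrac{r^2}{N_1\bar n}+\tfrac{s^2}{N_1}=\tfrac{V_0}{N_1}$) and that for $j\le N_2$ with $j\neq i$ the coefficient of $w_{ij}^2$ is exactly $V_j$. What remains of the rank-one term then distinguishes the cases: for $\rho_i=0$ the constraint forces $\sum_{j\neq i}v_{ij}=1$, so this term is the pure constant $s^2$ and there are no linear terms; for $\rho_i=1$ it becomes $\big(1-w_{i0}/N_1\big)^2$, contributing a linear term $-\tfrac{2s^2}{N_1}w_{i0}$; and for $\rho_i=2$ it becomes $(1-w_{ii})^2$, contributing $-2s^2w_{ii}$ and promoting the coefficient of $w_{ii}^2$ from $r^2/n_i$ to $V_i$. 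Thus in every case the objective has the form $\sum_j V_j w_{ij}^2+\tfrac{V_0}{N_1}w_{i0}^2+(\text{case-dependent linear term})+\text{const}$.

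With the objective in this form, setting the partial derivatives of the Lagrangian to zero gives $w_{ij}=\lambda/(2V_j)$ for the ordinary direct-user weights, $w_{i0}$ proportional to $N_1/V_0$ up to the case-dependent shift, and (for $\rho_i=2$) $w_{ii}=(\lambda+2s^2)/(2V_i)$. Substituting back into the constraint and using $\sum_{j=1}^{N_2}1/V_j=N_2/\bar V$ solves for $\lambda/2=V_0(1-s^2/V_\star)/(N_1+N_2V_0/\bar V)$, where $V_\star=V_0$ when $\rho_i=1$, $V_\star=V_i$ when $\rho_i=2$, and the correction $s^2/V_\star$ is absent when $\rho_i=0$. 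Reinserting $\lambda$ and regrouping — e.g.\ $w_{i0}=\tfrac{N_1(1-s^2/V_0)}{N_1+N_2V_0/\bar V}+\tfrac{s^2}{V_0}=\tfrac{N_1}{N_1+N_2V_0/\bar V}+\tfrac{N_2}{N_1+N_2V_0/\bar V}\tfrac{s^2}{\bar V}$ in the $\rho_i=1$ case — reproduces the stated formulas, and one verifies directly that the weights sum to $1$. I expect the main obstacle to be organizational rather than conceptual: correctly tracking how user $i$'s own data enters the estimator in each regime (through $\hat\theta^f$ when $\rho_i=1$, through the dedicated weight $w_{ii}$ when $\rho_i=2$, not at all when $\rho_i=0$), since this is what controls both the index ranges in \eqref{eq:error_eq} and the linear terms, and then pushing the case-dependent constant $s^2/V_\star$ cleanly through the final algebra into the compact form quoted in the statement.
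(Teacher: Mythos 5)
Your proposal is correct and follows essentially the same route as the paper's proof: both reduce to the first-order stationarity conditions of the convex quadratic \eqref{eq:error_eq} under the sum-to-one constraint (you via a Lagrange multiplier, the paper by eliminating $w_{i0}$ through the constraint, which yields the identical relation $V_k w_{ik} = V_0 w_{i0}/N_1 - s^2/N_1$ etc.), and then solve using $\sum_{j} 1/V_j = N_2/\bar V$. Your case-by-case simplification of the rank-one term and the explicit treatment of $\rho_i=0$ are just a cleaner organization of the same computation, and your final algebra reproduces the stated weights correctly.
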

\begin{proof} 
First we will consider the case where $\rho_i = 1$. Considering the point where the derivative of \eqref{eq:error_eq} with respect to $w_{ik}, k \geq 1$ is equal to zero gives:
\begin{equation}
    \frac{2r^2}{n_k}w_{ik} - \frac{2r^2}{\bar{n} N_1}\left( 1 - \sum_{j=1}^{N_2}w_{ij}\right) + s^2\left(2w_{ik} -2\frac{N_1 - 1}{N_1^2}\left(1 - \sum_{j=1}^{N_2}w_{ij}\right) + \frac{2}{N_1^2}\left( N_1 - 1 + \sum_{j=1}^{N_2}w_{ij}\right)\right) = 0,
\end{equation}
\begin{equation}
    \left( \frac{r^2}{n_k} + s^2 \right)w_{ik} = \left(\frac{r^2}{\bar{n}} + s^2\right)\frac{w_{i0}}{N_1} - \frac{s^2}{N_1}.
\end{equation}
It is easily verified from the second derivative that solving this equation gives us the unique minimum of \eqref{eq:error_eq}.
For ease of notation, define $V_k \defeq \left( \frac{r^2}{n_k} + s^2 \right)$ and $V_0 \defeq \left( \frac{r^2}{\bar{n}} + s^2 \right)$, $\bar{V} = \left( \frac{1}{N_2}\sum \limits_{k=1}^{N_2}\frac{1}{V_k}\right)^{-1}$. Thus, we have:
\begin{equation}
    w_{ik} = \frac{V_0\frac{w_{i0}}{N_1} - \frac{s^2}{N_1} }{V_k}.
\end{equation}
Noting that $w_{i0} + \sum_{j=1}^{N_2}w_{ij} = 1$, we have:
\begin{equation}
    w_{i0} + \frac{N_2}{N_1}\frac{V_0}{\bar{V}}w_{i0} - \frac{N_2}{N_1}\frac{s^2}{\bar{V}} = 1,
\end{equation}
\begin{equation}
    w_{i0}  = \frac{N_1}{N_1 + N_2\frac{V_0}{\bar{V}}} + \frac{N_2}{N_1 + N_2\frac{V_0}{\bar{V}}} \frac{s^2}{\bar{V}},
\end{equation}
\begin{equation}
    w_{ij} = \frac{V_0/V_j}{N_1 + N_2\frac{V_0}{\bar{V}}} - \frac{1}{N_1 + N_2\frac{V_0}{\bar{V}}} \frac{s^2}{V_j}.
\end{equation}
This completes the proof for those users $i$ such that $\rho_i=1$. When $\rho_i = 2$, the gradient condition with respect to $k \geq 1$, $k \neq i$ is:
\begin{equation}
w_{ik}V_k = \frac{V_0}{N_1}w_{i0},
\end{equation}
and similarly, the gradient condition when $k=i$ is:
\begin{equation}
    w_{ii} V_i + w_{i0} \frac{N_2 V_0}{N_1 \bar{V}} + \frac{s^2}{V_i} = 1.
\end{equation}
Combining these together gives our desired result. $\rho_i = 0$
\end{proof}

\subsection{Proof of Theorem \ref{thm:n_user}}

The symmetric Nash equilibria of our game is characterized \cite{cheng2004} by the minimizers of
\begin{equation}\label{eq:}
    \min_p \sum_{s \in \mathcal{E}} \posproj{u(s,p) - u(p,p)}^2,
\end{equation}
where $u(s,p)$ is the utility a user when they choose privacy level $\rho_i = s$, and all other users play mixed strategy $\mathbf{p}$, and $u(p,p) = \mathbb{E}_{s\sim \mathbf{p}}\left[ u(s,p)\right]$. Since our action space is binary, there are only two terms in this sum. Applying the definition of $u$ and writing out both terms of this sum yields:
\begin{eqnarray}
    \sum_{s \in \mathcal{E}} \posproj{u(s,p) - u(p,p)}^2 &=& \posproj{u(\rho_1,p) - u(p,p)}^2 + \posproj{u(\rho_2,p) - u(p,p)}^2\\
    &=& \posproj{c(1-p) - \alpha(\phi(p,p) - \phi(\rho_1, p))}^2 + \posproj{c(1-p) - \alpha(\phi(p,p) - \phi(\rho_2,p))}^2\\
    & = & \posproj{(1-p)(c - \alpha \gamma(p))}^2 + \posproj{-p(c - \alpha \gamma(p))}^2,
\end{eqnarray}
where we define $\gamma(p) \defeq \phi(\rho_2, p) - \phi(\rho_1, p)$. $\gamma$ is an important quantity in this problem that described the relative increase in payment a user receives for choosing a higher privacy level when the other users choose mixed strategy $\mathbf{p}$. In general, to say something about the equilibria, we must say something about $\gamma$. We can now use Assumptions \ref{as:monotone} and \ref{as:diminishing}, as well as the definition of $\phi(\cdot;\cdot)$ to establish properties of $\gamma$. First we show $\gamma(p) \geq 0$ using monotonicity of $U$:
\begin{align}
    \gamma(p)  
        &=\phi(\rho_2, p) - \phi(\rho_1, p) ,\\
        &= 
        \E_{\substack{\rho_j \sim \mathbf{p} \\ \rho_i = \rho'_2}} \left[ \frac{1}{N}\sum_{S \subseteq [N]\backslash\{i\}} \frac{1}{\binom{N-1}{\abs{S}}} \left( U(\brho_{S\cup \{ i\}}) - U(\brho_{S}) \right) \right] \notag \label{eq:def_mech}\\
        & \quad\quad\quad\quad\quad\quad\quad\quad\quad\quad\quad- \E_{\substack{\rho_j \sim \mathbf{p} \\ \rho_i = \rho'_1}}\left[ \frac{1}{N}\sum_{S \subseteq [N]\backslash\{i\}} \frac{1}{\binom{N-1}{\abs{S}}} \left( U(\brho_{S\cup \{ i\}}) - U(\brho_{S}) \right) \right], \\
        &= \frac{1}{N}\sum_{S \subseteq [N]\backslash\{i\}} \frac{1}{\binom{N-1}{\abs{S}}} \E_{\substack{\rho_{j} \sim \mathbf{p} \\ j \neq i }}\left[ U(\brho^{(i+)}_{S\cup \{ i\}}) - U(\brho^{(i-)}_{S\cup \{ i\}}) \right] \geq 0 \label{eq:nnrv}.
\end{align}
In \eqref{eq:def_mech} we have used the definition of the fair value from Theorem \ref{thm:fairnes2}, and in \eqref{eq:nnrv}, we have simplified the expression, exchanged the sum and expectation, and used the fact that the expectation of a non-negative random variable is non-negative.

Next, we will show that under Assumption \ref{as:diminishing} (and our assumption of equal marginal contribution) we also have $\gamma'(p) \geq 0$. Assume $p_2 > p_1$, and let $b(n,p) = \binom{N}{n}p^i(1-p)^{N-i}$:

\begin{align}
    \gamma(p_2) - \gamma(p_1) &= \frac{1}{N}\sum_{S \subseteq [N]\backslash\{i\}} \frac{1}{\binom{N-1}{\abs{S}}} \left(\E_{\substack{\rho_{j} \sim \mathbf{p}_2 \\ j \neq i }}\left[ U(\brho^{(i+)}_{S\cup \{ i\}}) - U(\brho^{(i-)}_{S\cup \{ i\}}) \right] - \E_{\substack{\rho_{j} \sim \mathbf{p}_1 \\ j \neq i }}\left[ U(\brho^{(i+)}_{S\cup \{ i\}}) - U(\brho^{(i-)}_{S\cup \{ i\}}) \right]\right)\\
    &= \frac{1}{N}\sum_{S \subseteq [N]\backslash\{i\}} \frac{1}{\binom{N-1}{\abs{S}}} \sum_{n = 0}^{N}(b(n,p_2) - b(n,p_1)) \Delta_i U(\brho(n))\quad  \text{s.t.} \;n_{private}(\brho(n)) = N-n\label{eq:delta_sum}
\end{align}
Now note that $b(n,p_2) - b(n,p_1)$ is zero-mean, and decreasing, furthermore, $\Delta_i U(\brho(n))$ is non-negative and non-increasing. Let $n^*$ represent the smallest value of $n$ such that $b(n,p_2) - b(n,p_1)$ is negative. Then we have:
\begin{align}
     \Delta_i U(\brho(n)) &=   \sum_{n=0}^{n^* - 1}\left(b(n,p_2) - b(n,p_1)\right) \Delta_i U(\brho(n)) + \sum_{n=n^*}^{N}\left(b(n,p_2) - b(n,p_1)\right) \Delta_i U(\brho(n)) \\
    & \geq \left(\sum_{n=0}^{n^*-1}b(n,p_2) - b(n,p_1)\right) \left( \Delta_i U(\brho(n^*-1)) -  \Delta_i U(\brho(n^*)) \right)\\
    &\geq 0.
\end{align}


With the knowledge that $\gamma(p) \geq 0$ and $\gamma'(p) \geq 0$ we can compute $p^*$ for three distinct cases. Defining $\gamma_{max} \defeq \max_{p} \gamma(p)$ and $\gamma_{min} \defeq \min_{p} \gamma(p)$, we have:
\paragraph{Case 1}$c - \alpha\gamma_{max} > 0$:
\begin{equation}
    \sum_{s \in \mathcal{E}} \posproj{u(s,p) - u(p,p)}^2 = \posproj{(1-p)(c - \alpha\gamma(p))}^2
\end{equation}
Since this quantity is non-negative, it is clearly minimized when $p^* = 1$, where it is exactly $0$. Furthermore, since $c - \alpha\gamma_{max} > 0$ is satisfied with strict inequality, it is the unique minimizer.
\paragraph{Case 2}$c/\alpha \in [\gamma_{min},\gamma_{max}]$:
\begin{equation}
    \sum_{s \in \mathcal{E}} \posproj{u(s,p) - u(p,p)}^2 = \posproj{(1-p)(c - \alpha \gamma(p))}^2 + \posproj{-p(c - \alpha \gamma(p))}^2,
\end{equation}
In the above case, this is minimized when $p^* \in \gamma^{-1}(c/\alpha)$.
\paragraph{Case 3}$c - \alpha \gamma_{min} < 0$:
\begin{equation}
    \sum_{s \in \mathcal{E}} \posproj{u(s,p) - u(p,p)}^2 = \posproj{-p(c - \alpha \gamma(p))}^2,
\end{equation}
In the above case, the expression is minimized when $p^* =0$.
To summarize, we have:
\begin{equation}\label{eq:p_star}
    p^{*}(\alpha) =     \begin{cases}
        1 & \text{if }\alpha < \frac{c}{\gamma_{max}}\\
        \gamma^{-1}(c/\alpha) & \text{if }\alpha \in [\frac{c}{\gamma_{max}}, \frac{c}{\gamma_{min}}]\\
        0 & \text{if } \alpha > \frac{c}{\gamma_{min}}
    \end{cases}.
\end{equation}
This establishes that the Nash equilibrium is cleanly separated into three regions. From this fact, we are able to show that the optimal strategy of the platform is also separated into three regions. 
We consider a platform that solves the following problem, where we define $U(p) \defeq \mathbb{E}_{\rho_i \sim \mathbf{p}}\left[ U(\brho)\right]$:
\begin{equation}
    \min_{\alpha} (1 - \alpha)U(p^*(\alpha)),
\end{equation}
Clearly, when privacy sensitivity is large, specifically, when $c \geq \gamma_{max}$ then $\alpha^* = 0$ is the optimal solution, since $p^*(\alpha) = 1$ for all $\alpha < 1$, and for $\alpha > 1$ the objective becomes negative.

Alternatively, when $c$ is very small, we can determine the optimal value as follows. We first note that Assumption \ref{as:monotone} implies that $U(p)$ is a decreasing function of $p$. Thus the condition for $\alpha^* = \frac{c}{\gamma_{min}}$ is:
\begin{equation}
    \frac{1 - c/\gamma_{min}}{1 - \alpha} > \frac{U(p^*(\alpha))}{U(0)}\;\;\forall \alpha < c/\gamma_{min}.
\end{equation}
Since the left-hand side takes value $\frac{1}{1-\alpha}$ at $c=0$, while the right-hand side is $1$, as well as the fact that both sides are continuous, by the Intermediate Value Theorem, (and our previous result, which implies that for $c$ large enough this condition does not hold), there is some minimum $c_{th}$, where this condition fails.  Thus we conclude, there are three regions:
  
(1) a region where $c \leq c_{th}$ is small, and $\alpha^*$ is the smallest $\alpha$ such that $p^* = 0$, (2) an intermediate region where a symmetric mixed strategy is played, and (3) a region where $c \geq \gamma_{max}$ , and $\alpha^* = 0, p^* = 1$
\subsubsection{Exact Calculation for Example}\label{apdx:exact_n_user_ex}
In this section, we work thought our example in Section 5, showing that using Theorem~\ref{thm:n_user} and some basic calculus, we can determine valuable information about $\alpha^*$ The utility function is of the form:
\begin{equation}
    \mathbf{U} = 
    \begin{bmatrix}
    0 & 2/3 \\
    2/3 & 1
    \end{bmatrix}.
\end{equation}
\begin{equation}
        \mathbf{\Phi}^{(2)}_1 = \alpha \begin{bmatrix}
    0 & 2/3  \\
    0 & 1/2
\end{bmatrix},
\quad
\mathbf{\Phi}^{(2)}_2 = \alpha \begin{bmatrix}
    0 & 0  \\
    2/3 & 1/2
\end{bmatrix}
\end{equation}
When users play a mixed strategy $p$, the utility $U(p)$ can be written as
\begin{equation}
    U(p) = -\frac{1}{3}p^2 - \frac{2}{3}p + 1.
\end{equation}
The gamma function likewise can be computed as
\begin{equation}
    \gamma(p) = \frac{1}{2} + \frac{1}{6}p.
\end{equation}
Thus, we find that $\gamma_{max} = \frac{2}{3}$ and $\gamma_{min} = \frac{1}{2}$. This allows us to compute $p^*(\alpha):$

\begin{equation}\label{eq:p_star_ex}
    p^{*}(\alpha) =     \begin{cases}
        1 & \text{if } \alpha < \frac{3c}{2}\\
        6\frac{c}{\alpha}-3 & \text{if }\frac{3c}{2} \leq \alpha \leq 2c\\
        0 & \text{if } \alpha > 2c
    \end{cases}.
\end{equation}

From this and Theorem~\ref{thm:n_user}, we immediately know $\alpha^* = 0$ when $c \geq \frac{2}{3}$. Next, we can determine $c_{th}$. First, we compute $U(p^*(\alpha))$:
\begin{equation}
    U(p^*(\alpha)) =     \begin{cases}
        0 & \text{if } \alpha < \frac{3c}{2}\\
        \frac{8c}{\alpha} - \frac{12c^2}{\alpha^2} & \text{if }\frac{3c}{2} \leq \alpha \leq 2c \\
        1 & \text{if } \alpha > 2c
    \end{cases}.
\end{equation}
The threshold is concerned only with $\alpha \leq 2c$. We note that for $c < \frac{1}{3}$, the function $(1-\alpha)U(p^*(\alpha))$ is monotone on $0 \leq \alpha \leq 2c$, and attains the value $1-2c$ at $\alpha = 2c$. However, for $c < \frac{1}{3}$, it exceeds $1-2c$ at it's maximum value at $\alpha^* = \frac{6c}{3c+2}$. Thus, $c_{th}$ is $\frac{1}{3}$. 
To summarize, the optimal $\alpha^*$ is:
\begin{equation}
    \alpha^* = \begin{cases}
        2c & \text{if } c < \frac{1}{3}\\
        \frac{6c}{3c+2} & \text{if }\frac{1}{3} \leq c \leq \frac{2}{3} \\
        0 & \text{if } c > \frac{2}{3}
    \end{cases},
\end{equation}
and making the neccessary substitutions generates the plots in Fig~\ref{fig:2_user_utility}.
\section{Monotonicity of Utility} \label{apdx:monotone}
 When beginning this work, the dearth of algorithms that supported heterogeneous privacy constraints surprised us, given the increasing number of privacy options available to users. All of the algorithms that did exist were provably sub-optimal \cite{Hu2020}, or placed constraints on privacy parameters to prove approximate optimality \cite{Fallah2022}. In both of these works, the pathology of the algorithm leads to error that is not monotonically decreasing in $\brho$. For DP-based notions of privacy, which both of the aforementioned works are, one can prove that an optimal error must be monotonic. This observation inspired a recent work that studies a \emph{saturation} phenomenon \cite{chaudhuri2023}. Similar ideas can also be found in \cite{Cummings2023}. The idea is that an optimal algorithm will sometimes give users that choose a large $\eps_i$ more privacy than they asked for, to ensure that it still efficiently uses information from users $j$ with $\eps_j \ll \eps_i$.
  \section{Solving the Federated Mean Estimation Mechanism Design Problem}\label{apdx:numerical_solution}

In this section we discuss how we produce Fig~\ref{fig:opt}. Algorithm~\ref{alg} describes the process. In the first step, we exploit the symmetry of the utility function by exploiting the fact that the utility does not change when $\rho_i$ is exchanges between two users with the same $n_i$. This greatly reduces the number of payment functions that need to be calculated. 
Next, we compute the payment functions, again exploiting symmetry whenever possible to reduce calculations. Finally, for each partition, we efficiently search through the partition to see if there is a $\brho$ that leads to a NE. 
\RestyleAlgo{ruled}

\begin{algorithm}
\SetKwData{Left}{left}\SetKwData{This}{this}\SetKwData{Up}{up}

\SetKwFunction{GetValidPartitions}{GetValidPartitions}
\SetKwFunction{TreeSearch}{TreeSearch}
\SetKwFunction{Shapley}{Shapley}
\SetKwFunction{Utility}{Utility}
\SetKwFunction{len}{len}
\SetKwFunction{FindCompress}{FindCompress}
\SetKwData{partitions}{partitions}
\SetKwData{neexist}{neExists}
\SetKwData{phisym}{phi}
\SetKwData{narray}{nArray}
\SetKwData{grid}{grid}
\SetKwData{maxutil}{maxUtil}
\SetKwData{currutil}{currUtil}
\SetKwData{carray}{cArray}
\SetKwInOut{Input}{input}
\SetKwInOut{Output}{output}
\Input{$n_i$, $c_i$ : $i=1\dotsc,N$}
\Output{$\alpha^*$}
\narray{$i$} $\gets$ $n_i$, $i=1\dotsc,N$\;
\carray{$i$} $\gets$ $c_i$, $i=1\dotsc,N$\;
\partitions $\gets$ \GetValidPartitions{$n_i$ : $i=1\dotsc,N$ } \tcc*{all $\brho$ that produce unique $U$}

\For{$i=1$ \KwTo \len{\partitions}}{
    $\brho \gets$ \partitions{$i$} \tcc*{one representative $\brho$ from the partition}
    
    \For{$j = 1$ \KwTo $N$}{
        \phisym{j} $\gets$ \Shapley{$\brho$, $j$, \narray} \tcc*{Actual code skips repeated calculations}
    }
    
    \For{$\alpha$ $\in$ \grid}{
        \neexist $\gets$ \TreeSearch{$\alpha$ $\times$ \phisym, \carray} \tcc*{Check if any $\brho$ in the partition is NE}
        \If{\neexist}{
            \currutil $\gets$ $(1- \alpha) \times $\Utility($\brho$, \narray)\;
            \If{\currutil $>$ \maxutil}{
                $\alpha^*$ $\gets$ $\alpha$ \tcc*{Update $\alpha^*$ if needed}
                \maxutil $\gets$ \currutil\; 
            }
        }{}
    }
}
\caption{Find optimal $\alpha$}\label{alg}
\end{algorithm}

\paragraph{Comment on Fig~\ref{fig:opt}} The observant reader will notice there is almost always one user with $\rho_i=1$. This is an artifact of our model, where if only one user chooses $\rho_i=1$, it essentially behaves the same from a utility perspective as if it has chosen $\rho_i=2$, but gets a reduced privacy sensitivity. It is possible to show that this means that other than the all zeros NE, there will always be one user with $\rho_i=1$.

 \section{Equilibria for Binary Privacy Level with Two Different Privacy Sensitivities}\label{apdx:equilibria}
Let $\mathbf{p} = [p\;(1-p)]^T$ be the mixed strategy of user 1 and let $\mathbf{q}= [q\;(1-q)]^T$ be the mixed strategy of user 2. When they play these respective strategies, the utility of user $1$ is:
\begin{eqnarray*}
    u_1(\mathbf{p},\mathbf{q}) &=& pq\alpha\phi(\rho_1',\rho_2') + p(1-q)\alpha\phi(\rho_1',\rho_2') + (1-p)q\alpha\phi(\rho_2', \rho_1') + (1-p)(1-q)\alpha \phi(\rho_2',\rho_2') + -c_1(1-p)\\
    &=& p\alpha\phi(\rho_1',\mathbf{q}) + (1-p)\alpha\phi(\rho_2', \mathbf{q}) -c_1(1-p)\\
    &=& p\left( c_1 - \alpha\gamma(q)\right) + \alpha\phi(\rho_2', \mathbf{q}) - c_1.
\end{eqnarray*}
By a symmetric argument, we also have that 
\begin{equation}
    u_2(\mathbf{p},\mathbf{q}) = q\left( c_2 - \alpha\gamma(p)\right) + \alpha\phi(\rho_2', \mathbf{p}) - c_2.
\end{equation}
We are interested in characterizing the best response maps:
\begin{eqnarray}
    \mathrm{BR}_1(\mathbf{q};\alpha) = \argmax_{\mathbf{p}} u_1(\mathbf{p},\mathbf{q})\quad \mathrm{BR}_2(\mathbf{p};\alpha) = \argmax_{\mathbf{q}} u_2(\mathbf{p},\mathbf{q}),
\end{eqnarray}
since their intersection characterize the set of NEs.

We begin with finding an analytic expression for $\mathrm{BR}_1(\mathbf{q};\alpha)$, which, we will break into three distinct cases:
\paragraph{Case 1:}$c_1 - \alpha\gamma_{max} > 0$

In this case, the constant factor in front of $p$ is always positive (invoking the monotonicity and non-negativity we proved in the previous section under the assumptions), thus the best response is:
\begin{eqnarray}
    \mathrm{BR}_1(\mathbf{q};\alpha) = [1\;0]^T \;\forall \alpha < \frac{c_1}{\gamma_{max}}.
\end{eqnarray}
\paragraph{Case 2:}$c_1 - \alpha \gamma_{min} < 0$

In this case, by a similar argument to before, the constant factor in front of $p$ is always negative, thus the best response is:
\begin{eqnarray}
    \mathrm{BR}_1(\mathbf{q};\alpha) = [0\;1]^T \;\forall \alpha > \frac{c_1}{\gamma_{min}}.
\end{eqnarray}
\paragraph{Case 3:} $\alpha \in \left[ \frac{c_1}{\gamma_{max}}, \frac{c_1}{\gamma_{min}}\right]$

In this case, the sign of the factor in front of $p$ changes with $\mathbf{q}$. We can write the best response piece-wise as:
\begin{equation}
\mathrm{BR}_1(\mathbf{q};\alpha) = 
        \begin{cases}
        [1\;0]^T & \text{if } c_1 - \alpha \gamma(q) > 0\\
        \{[a\; b]^T : a,b\geq 0,\; a+b=1 \} & \text{if } c_1 - \alpha \gamma(q) = 0\\
        [0\;1]^T & \text{if } c_1 - \alpha \gamma(q) < 0
    \end{cases}
\end{equation}
This same analysis can be applied to $\mathrm{BR}_2(\mathbf{p};\alpha)$. The NE is characterized by the sets where these two maps intersect. The following table summarize the equilibria $p^*, q^*$, written as scalars for readability.
\begin{table}[h]
    \centering
    \begin{tabular}{llll}
        \toprule
         & $\alpha \leq \frac{c_1}{\gamma_{max}}$ & $\alpha \in \left[\frac{c_1}{\gamma_{max}},\frac{c_1}{\gamma_{min}} \right]$ & $\alpha > \frac{c_1}{\gamma_{min}}$\\
         \midrule
        $\alpha \leq \frac{c_2}{\gamma_{max}}$ & $(1,1)$ & $(0,1)$ & $(0,1)$ \\
        $\alpha \in \left[\frac{c_2}{\gamma_{max}},\frac{c_2}{\gamma_{min}} \right]$ & $(1,0)$  & $\left\{(1,0), (0,1), \left( \gamma^{-1}\left(\frac{c_1}{\alpha}\right), \gamma\left(\frac{c_2}{\alpha}\right)\right)\right\}$ & (0,1) \\
        $\alpha > \frac{c_2}{\gamma_{min}}$ & (1,0) & (1,0) & (0,0) \\
        \bottomrule
    \end{tabular}
\end{table}

 When $\alpha$ is below the threshold for the two users (the top left entry), both $c_1$ and $c_2$ are too small for it to be worthwhile for the users to participate at the lower privacy option. Conversely, if $\alpha$ is above the threshold for both users, then both users choose the less private option. When neither of these extremes occur the results are more nuanced.
\end{document}